\def\eqref#1{equation~\ref{#1}}
\def\1{\bm{1}}
\DeclareMathAlphabet{\mathsfit}{\encodingdefault}{\sfdefault}{m}{sl}
\SetMathAlphabet{\mathsfit}{bold}{\encodingdefault}{\sfdefault}{bx}{n}
\newtheorem{theorem}{Theorem}[section]
\newtheorem{proposition}[theorem]{Proposition}
\newtheorem{definition}{Definition}
\definecolor{LightGray}{gray}{0.9}
\def\@fnsymbol#1{\ensuremath{\ifcase#1\or *\or \ddagger\or
\mathsection\or \mathparagraph\or \|\or **\or
\ddagger\ddagger \else\@ctrerr\fi}}
\acrodef{RL}{reinforcement learning}
\acrodef{PPO}{proximal policy optimization}
\acrodef{MDP}{Markov decision process}
\acrodef{LOOP}{leave-one-out PPO}
\acrodef{IS}{importance sampling}
\acrodef{RLOO}{reinforce leave-one-out}
\acrodef{SF}{score function}
\title{A Simple and Effective Reinforcement Learning Method\\ for Text-to-Image Diffusion Fine-tuning}
\author{\name Shashank Gupta\thanks{Corresponding author.}\,\,\thanks{Work done during an internship at Meta AI.} \email 27392shashankgupta@gmail.com \\
       \addr University of Amsterdam, The Netherlands
       \AND
       \name Chaitanya Ahuja \email chahuja@meta.com \\
       \addr Meta
       \AND
       \name Tsung-Yu Lin \email tsungyulin@meta.com \\
       \addr Meta
       \AND
       \name Sreya Dutta Roy \email sreyaduttaroy@meta.com \\
       \addr Meta
       \AND
       \name Harrie Oosterhuis \email harrie.oosterhuis@ru.nl \\
       \addr Radboud University, Nijmegen, The Netherlands
       \AND
       \name Maarten de Rijke \email m.derijke@uva.nl \\
       \addr University of Amsterdam, The Netherlands
       \AND
       \name Satya Narayan Shukla\thanks{Corresponding author.} \email sns.iitkgp@gmail.com \\
       \addr Meta
}
\begin{document}
\maketitle

\begin{abstract} 
\Ac{RL}-based fine-tuning has emerged as a powerful approach for aligning diffusion models with black-box objectives. \Ac{PPO} is a popular choice of method for policy optimization. While effective in terms of performance and sample complexity, \ac{PPO} is highly sensitive to hyper-parameters and involves substantial computational overhead. 
REINFORCE, on the other hand, mitigates some implementation complexities such as high memory overhead and sensitive hyper-parameter tuning, but has suboptimal performance due to high variance and crucially sample inefficiency, which is the primary notion of efficiency we study in this work. While the variance of the REINFORCE can be reduced by sampling multiple actions per input prompt and using a baseline correction term, it still suffers from sample inefficiency.  
To address these challenges, we systematically analyze the sample efficiency-effectiveness trade-off between REINFORCE and \ac{PPO}, and propose \acfi{LOOP}, a novel \ac{RL} for diffusion fine-tuning method. 
\ac{LOOP} combines variance reduction techniques from REINFORCE, such as sampling multiple actions per input prompt and a baseline correction term, with the robustness and sample efficiency of \ac{PPO} via clipping and importance sampling.
Our results demonstrate that \ac{LOOP} effectively improves diffusion models on various black-box objectives, and achieves a better balance between sample efficiency and final performance. 
\end{abstract}

\section{Introduction}
\label{sec:intro}

Diffusion models have emerged as a powerful tool for generative modeling~\citep{sohl2015deep,ho2020denoising}, with a strong capacity to model complex data distributions from various modalities, like images~\citep{rombach2022high}, text~\citep{austin2021structured}, natural molecules~\citep{xu2023geometric}, and videos~\citep{blattmann2023align}.

Diffusion models are typically pre-trained on a large-scale dataset, such that they can subsequently generate samples from the same data distribution. The training objective typically involves maximizing the data distribution likelihood. This pre-training stage helps generate high-quality samples from the model. However, some applications might require optimizing a custom reward function, for example, optimizing for generating aesthetically pleasing images~\citep{xu2024imagereward}, semantic alignment of image-text pairs based on human feedback~\citep{schuhmann2022laion}, or generating molecules with specific properties~\citep{wang2024fine}. 

To optimize for such black-box objectives, \ac{RL}-based fine-tuning has been successfully applied to diffusion models~\citep{fan2024reinforcement,black2023training,wallace2024diffusion,li2024aligning,gu2024diffusion}. For \ac{RL}-based fine-tuning, the reverse diffusion process is treated as a \ac{MDP}, wherein prompts are treated as part of the input state, the generated image at each time-step is mapped to an action, which receives a reward from a fixed reward model (environment in standard \ac{MDP}), and finally the diffusion model is treated as a policy, which we optimize to maximize rewards. For optimization, typically \ac{PPO} is applied~\citep{fan2024reinforcement,black2023training}. In applications where getting a reward model is infeasible or undesirable, ``RL-free'' fine-tuning (typically offline) can also be applied~\citep{wallace2024diffusion}. For this work, we only focus on diffusion model fine-tuning using ``online'' \ac{RL} methods, specifically policy-gradient style methods, like \ac{PPO}~\citep{schulman2017proximal}.

An advantage of \ac{PPO} is that it removes the incentive for the new policy 
to deviate too much from the previous reference policy, via importance sampling 
and clipping~\citep{schulman2017proximal}, which, as we discuss below, leads to 
superior sample efficiency over simpler policy-gradient methods such as 
REINFORCE~\citep{williams1992simple}. However, \ac{PPO} comes with significant 
implementation overhead: in practice, \ac{RL} fine-tuning for diffusion models 
via \ac{PPO} requires concurrently loading three models in memory:
\begin{enumerate*}[label=(\roman*)]
    \item The \textbf{reference policy:} The base policy, usually initialized 
    with the pre-trained diffusion model.
    \item The \textbf{current policy:} The \ac{RL} fine-tuned policy, also 
    initialized with the pre-trained diffusion model.
    \item The \textbf{reward model:} Typically a large vision-language model 
    trained via supervised fine-tuning~\citep{lee2023aligning}, which assigns 
    a scalar reward to the final generated image.
\end{enumerate*}
In addition, \ac{PPO} is known to be sensitive to hyper-parameters~\citep{engstrom2019implementation,zheng2023delve,huang2024n+}. 
We note these costs as practical motivation for seeking simpler alternatives, 
though the primary axis of comparison in this work is \textit{sample efficiency}, 
not computational cost.

Simpler approaches, such as REINFORCE~\citep{williams1992simple}, avoid such additional implementation complexities by requiring neither a separate reference policy in memory nor sensitive hyperparameter tuning. However, this implementation simplicity comes at a cost: in practice, REINFORCE is \textit{less} sample efficient than \ac{PPO}, and suffers from high variance and instability~\citep{gupta2024optimal,gupta2025safe}. A variant of REINFORCE, \ac{RLOO}~\citep{kool2019buy,ahmadian2024back}, samples multiple sequences per input prompt and applies a baseline correction term to reduce variance; however, it remains sample inefficient because it does not permit trajectory reuse across policy updates, unlike importance-sampling-based methods.

This raises a fundamental question about the \textbf{efficiency-effectiveness} trade-off in \ac{RL}-based diffusion fine-tuning. In this work, we systematically explore the trade-off between sample efficiency, defined as requiring fewer training prompts to achieve good performance, and effectiveness, measured by stable training and final reward. We compare a simple REINFORCE approach with the standard \ac{PPO} framework, demonstrating that while REINFORCE greatly reduces implementation complexity, it comes at the cost of reduced performance and sample complexity.

Motivated by this finding, we propose a novel \ac{RL} method for diffusion fine-tuning, \ac{LOOP}, which combines the best of both worlds. To reduce the variance during policy optimization, \ac{LOOP} uses multiple actions (diffusion trajectories) and a REINFORCE-style baseline correction term per input prompt.
To maintain the stability, robustness, and sample complexity of \ac{PPO}, \ac{LOOP} uses clipping and importance sampling. 

We clarify an important distinction regarding the notion of efficiency in this work. When discussing the efficiency-effectiveness trade-off, we primarily refer to sample efficiency, defined as an algorithm's ability to achieve better performance with the same number of input prompts during training. \ac{LOOP} exhibits superior sample efficiency compared to \ac{PPO}. We emphasize this notion because it directly translates into better performance for a fixed training dataset size, which is often the dominant constraint in practice when optimizing diffusion models with computationally expensive reward models. For a fixed number of training prompts, \ac{LOOP} attains higher reward values by sampling multiple trajectories per prompt and employing a leave-one-out baseline correction term.

However, we note that while \ac{LOOP} demonstrates superior sample efficiency by requiring fewer training prompts to achieve a given performance level, it requires $K$ diffusion sampling passes per prompt, leading to an extra $O(K)$ computational overhead relative to \ac{PPO}. Future work could explore adaptive sampling strategies, asynchronous generation pipelines~\citep{han2025asyncflow}, or distributed trajectory sampling~\citep{bartoldson2025trajectory} to mitigate this computational cost while preserving sample efficiency gains. We report the total GPU runtime and memory usage in Appendix~\ref{app:runtime}.
We leave the study and improvement of the computational efficiency of \ac{LOOP} as part of future work.

Our approach is conceptually similar to the recently proposed GRPO method for RL fine-tuning of LLMs~\citep{shao2024deepseekmath}. The key technical differences are: (i) LOOP does not apply standard-deviation normalization in the advantage calculation. Recent work on LLM fine-tuning suggests that removing this normalization term can improve performance~\citep{liu2025understanding}. (ii) Following this recent work, LOOP omits the KL penalty term. Prior studies indicate that explicit KL regularization has minimal practical effect on performance \citep{black2023training}, and recent theoretical work shows that on-policy RL methods implicitly maintain KL proximity to the base policy even without explicit regularization \citep{shenfeld2025rl}. (iii) In the diffusion setting, the reverse process has a fixed sequence length across all generations, making sequence-length normalization unnecessary. We provide an empirical comparison on GRPO vs.\ LOOP in Appendix~\ref{app:loop_vs_grpo}.

For the primary evaluation benchmark, we choose the text-to-image compositionality benchmark T2I-CompBench~\citep{huang2023t2i}. Text-to-image models often fail to satisfy an essential reasoning ability of attribute binding, i.e., the generated image often fails to \textit{bind} certain \textit{attributes} specified in the instruction prompt~\citep{huang2023t2i,ramesh2022hierarchical,fu2024enhancing}. 
As illustrated in Figure~\ref{fig:improve-attribute-binding}, LOOP outperforms previous diffusion methods on attribute binding.
As attribute binding is a key skill necessary for real-world applications, we choose the T2I-CompBench benchmark alongside two other common tasks: aesthetic image generation and image-text semantic alignment~\citep{black2023training}.

\begin{figure*}[!h]
    \centering
    \setlength{\tabcolsep}{1.2mm}
    \begin{tabular}{r@{~}ccccc}
    \raisebox{1.25cm}{SD v2} &
        \includegraphics[width=2.75cm]{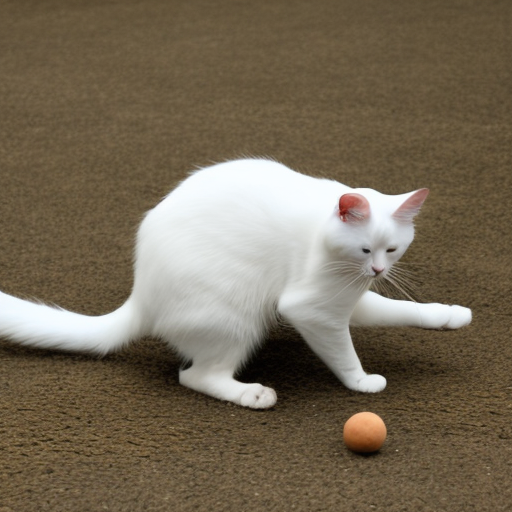}
        & \includegraphics[width=2.75cm]{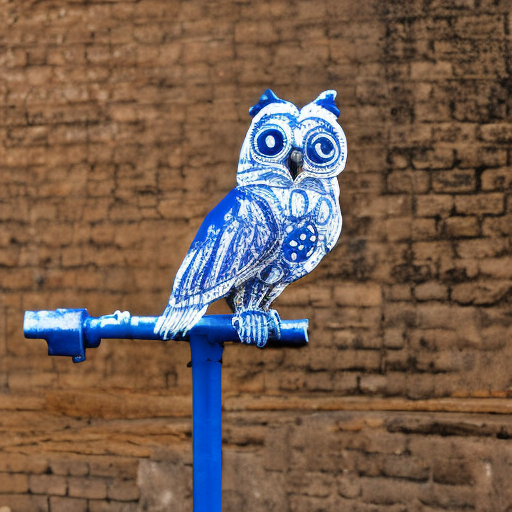}
        & \includegraphics[width=2.75cm]{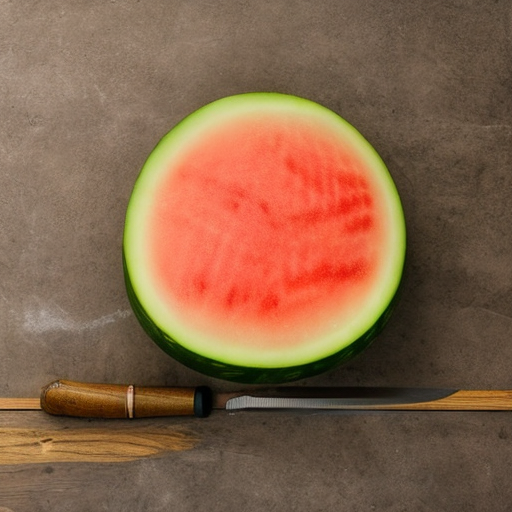}
        & \includegraphics[width=2.75cm]{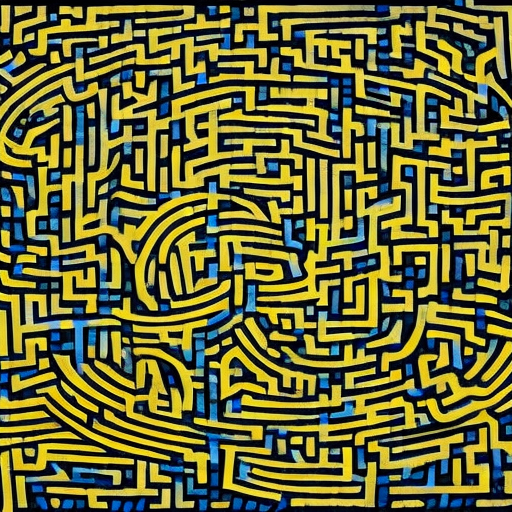}
        & \includegraphics[width=2.75cm]{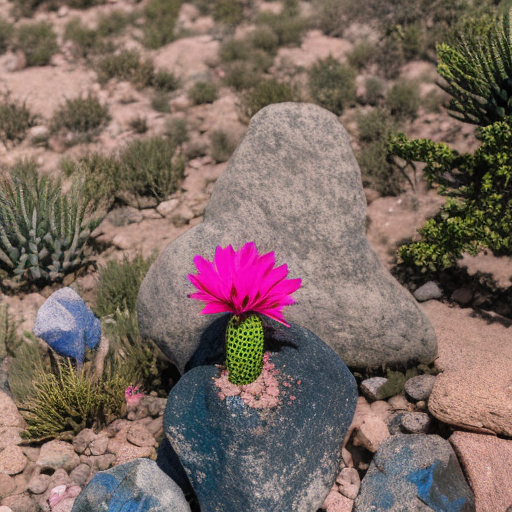} \\
        \raisebox{1.25cm}{DDPO} &
        \includegraphics[width=2.75cm]{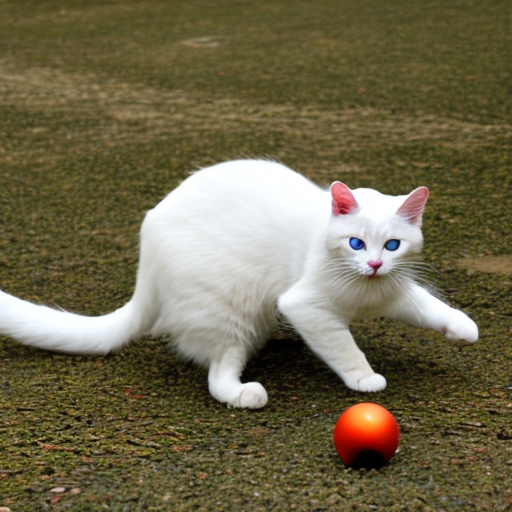}
        & \includegraphics[width=2.75cm]{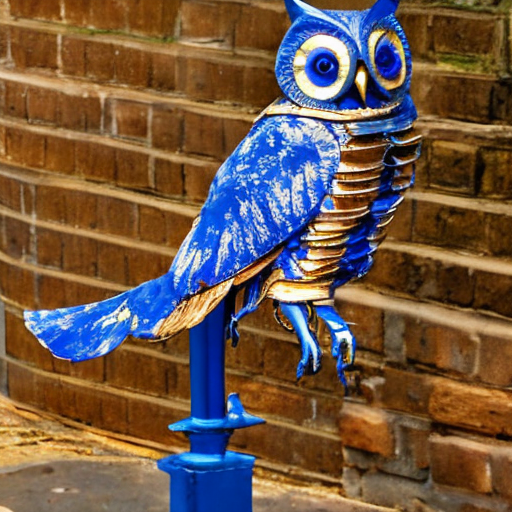}
        & \includegraphics[width=2.75cm]{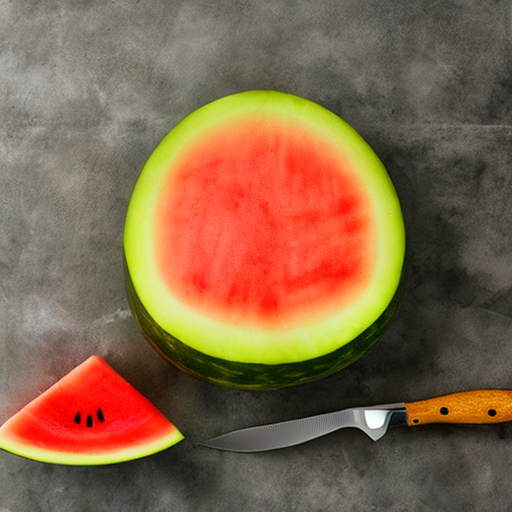}
        & \includegraphics[width=2.75cm]{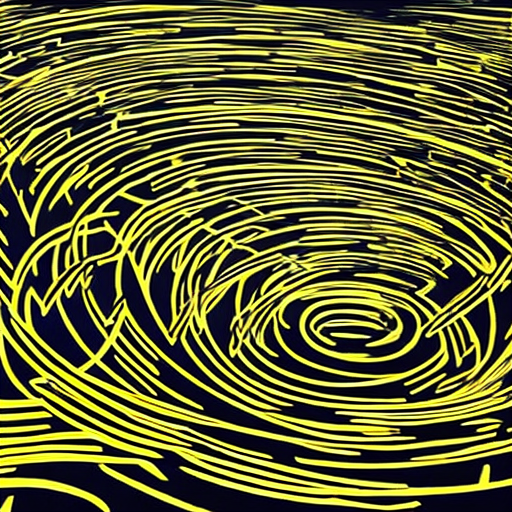}
        & \includegraphics[width=2.75cm]{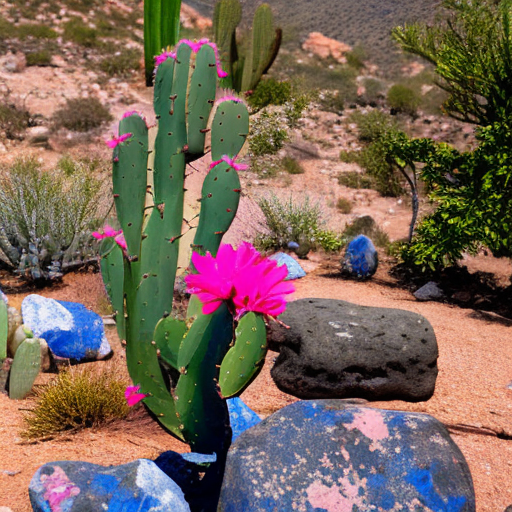}
    \\
        \raisebox{1.25cm}{LOOP} & \includegraphics[width=2.75cm]{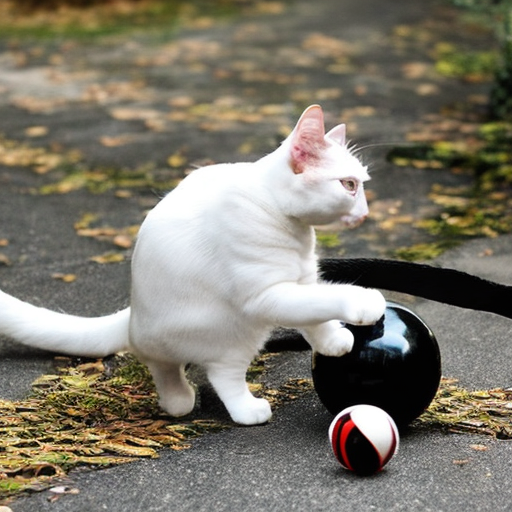}
        & \includegraphics[width=2.75cm]{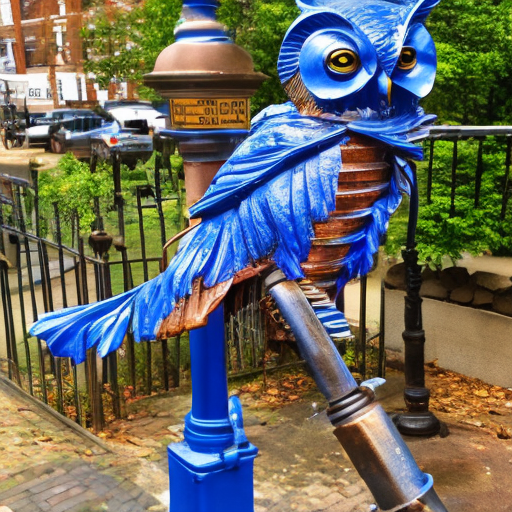}
        & \includegraphics[width=2.75cm]{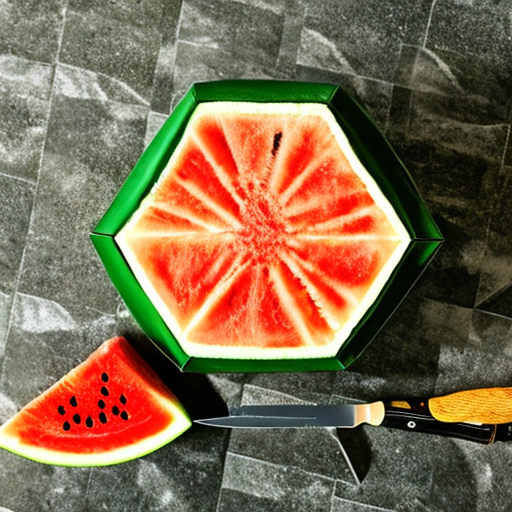}
        & \includegraphics[width=2.75cm]{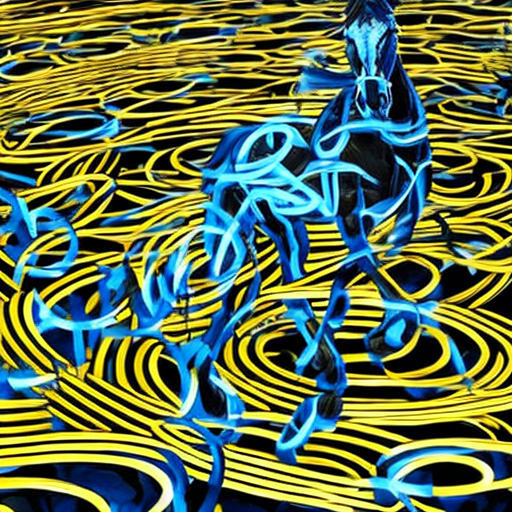}
        & \includegraphics[width=2.75cm]{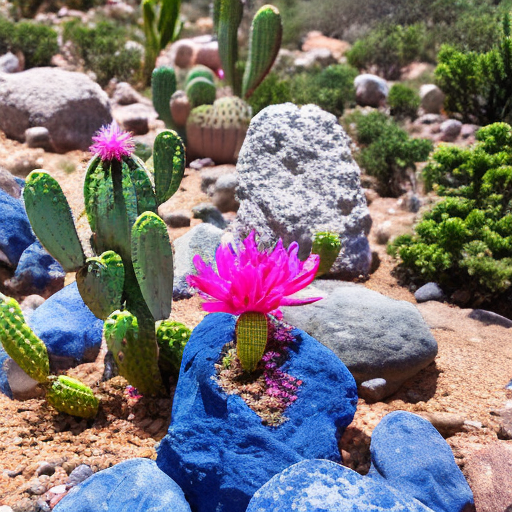} \\
        & 
        \parbox[t]{2.75cm}{\raggedright\small ``White cat playing with black ball''} & 
        \parbox[t]{2.75cm}{\raggedright\small ``Mechanical owl with cobalt blue feathers on a rusted bronze lamppost''} & 
        \parbox[t]{2.75cm}{\raggedright\small ``A hexagonal watermelon placed besides a knife''} & 
        \parbox[t]{2.75cm}{\raggedright\small ``Black horse with glowing cyan patterns in maze of floating golden rings''} & 
        \parbox[t]{2.75cm}{\raggedright\small ``Neon pink cactus growing on a cobalt blue rock''}
        \\
    \end{tabular}
    \caption{\textbf{LOOP improves attribute binding}. Qualitative examples presented from images generated via Stable Diffusion (SD) 2.0 (first row), DDPO \citep{black2023training} (second row), and \ac{LOOP} $k=4$ (third row).  
    In the first prompt, SD and DDPO both fail to bind the \textit{color black} with the \textit{ball} in the image, whereas \ac{LOOP} binds the color black to the ball. In the second example, SD and DDPO fail to generate \textit{rusted bronze color} lamppost, whereas \ac{LOOP} manages to do that. In the third image, SD and DDPO fail to bind the \textit{shape hexagon} to the watermelon, whereas \ac{LOOP} manages so. In the fourth example, SD and DDPO fail to generate the \textit{black horse} with flowing cyan patterns, whereas \ac{LOOP} generates the horse with the correct color attribute. Finally, in the last image, SD and DDPO fail to bind \textit{cobalt blue} color to the rock, whereas \ac{LOOP} binds that successfully.
    }
    \label{fig:improve-attribute-binding}
    \end{figure*}

To summarize, our main contributions are as follows:

\begin{itemize}\itemsep0em
    \item \textbf{PPO vs.\ REINFORCE efficiency-effectiveness trade-off.} We systematically study how design elements like clipping, reference policy, value function in \ac{PPO} compare to a simple REINFORCE method, highlighting the efficiency-effectiveness trade-off in diffusion fine-tuning. To the best of our knowledge, we are the first ones to present such a systematic study, highlighting the trade-offs in diffusion fine-tuning.
    
\item \textbf{Introducing LOOP.} We propose \ac{LOOP}, a novel \ac{RL} for diffusion fine-tuning method combining the best of REINFORCE and \ac{PPO}. \ac{LOOP} uses multiple diffusion trajectories and a REINFORCE baseline correction term for variance reduction, as well as clipping and importance sampling from \ac{PPO} for robustness and sample efficiency.

\item \textbf{Empirical validation.} To validate our claims empirically, we conduct experiments on the T2I-CompBench benchmark image compositionality benchmark. The benchmark evaluates the attribute binding capabilities of the text-to-image generative models and shows that LOOP succeeds where previous text-to-image generative models often fail. We also evaluate LOOP on two common objectives from the literature on \ac{RL} for diffusion: image aesthetic and text-image semantic alignment~\citep{black2023training}.
\end{itemize}

The remainder of the paper is organized as follows. In the next section, we provide the necessary background and discuss related work. Section~\ref{sect:revist_rl} revisits the efficiency–effectiveness trade-off between REINFORCE and PPO. Section~\ref{sec:PLOO_method} introduces our proposed method, Leave-One-Out PPO (LOOP) for diffusion fine-tuning. Section~\ref{sec:experimental_setup} describes the experimental setup, Section~\ref{sec:hyper_params} details our hyperparameters, and Sections~\ref{sect:results_and_discussion} and~\ref{section:qualitative-examples} present our results and qualitative examples. Finally, Section~\ref{sect:conclusion} concludes the paper.

\section{Background and Related Work}

\subsection{Diffusion Models}
We focus on denoising diffusion probabilistic models (DDPM) as the base model for text-to-image generative modeling~\citep{ho2020denoising,sohl2015deep}. Briefly, given a conditioning context variable $\mathbf{c}$ (a text prompt in our case), and the data sample $\mathbf{x}_0$, DDPM models $p(\mathbf{x}_0 \mid \mathbf{c})$ via a Markov chain of length $T$, with the following dynamics:
\begin{equation}
    p_{\theta}(\mathbf{x}_{0:T} \mid \mathbf{c}) = p(\mathbf{x}_{T} \mid \mathbf{c}) \prod_{t=1}^{T} p_{\theta}(\mathbf{x}_{t-1} \mid \mathbf{x}_{t}, \mathbf{c}).
    \label{eq:diffusion_mc}
\end{equation}
Image generation in a diffusion model is achieved via the following ancestral sampling scheme, which is a reverse diffusion process:
\begin{equation}
\mathbf{x}_T {} \sim \mathcal{N}(\mathbf{0}, \mathbf{I}), \;\; 
\mathbf{x}_{t} \sim N\left(\mathbf{x}_t | \mu_\theta(\mathbf{x}_t, \mathbf{c}, t), \sigma^2_\theta I\right),
 \forall t \in [0, T-1],
\end{equation}
where the distribution at time-step $t$ is assumed to be a multivariate normal distribution with the predicted mean $\mu_\theta(\mathbf{x}_t, \mathbf{c}, t)$, and a constant variance.

\subsection{Proximal Policy Optimization (PPO) for RL}
\Ac{PPO} was introduced for optimizing a policy with the objective of maximizing the overall reward in the \ac{RL} setup. \ac{PPO} removes the incentive for the current policy $\pi_t$ to diverge from the previous policy $\pi_{t-1}$ outside the range $[ 1-\epsilon, 1+\epsilon]$, where $\epsilon$ is a hyper-parameter. As long as the subsequent policies are closer to each other in the action space, the monotonic policy improvement bound guarantees a monotonic improvement in the policy's performance as the optimization progresses. This property justifies the clipping term in the mathematical formulation of the \ac{PPO} objective function~\citep{schulman2015trust,achiam2017constrained,queeney2021generalized}. Formally, the \ac{PPO} objective function is:
\begin{equation}
\mbox{}\hspace*{-2mm}
J(\theta) \! = \!
\mathbb{E} \!\!\left[
\min \!\left(
r_t(\theta)\,\mathbf{\hat{A}}_t,\,
\text{clip}\!\left(r_t(\theta), 1-\epsilon, 1+\epsilon\right)\!\mathbf{\hat{A}}_t
\right)
\right]\!,
\!\mbox{}
\label{eq:ppo_obj}
\end{equation}
where $r_t(\theta)=\frac{\pi_t(a \mid c)}{\pi_{t-1}(a \mid c)}$ is the importance sampling ratio between the current policy $\pi_{t}(a \mid c)$ and the previous reference policy $\pi_{t-1}(a \mid c)$, $\hat{A}_t$ is the advantage function~\citep{sutton2018reinforcement}, and the $\text{clip}$ operator restricts the importance sampling ratio in the range $[ 1-\epsilon, 1+\epsilon]$.

\subsection{RL for Text-to-Image Diffusion Models}
The diffusion process can be viewed as an \ac{MDP} $\left( \mathcal{S}, \mathcal{A}\right.$, $\left.\mathcal{P}, \mathcal{R}, \rho_0 \right)
$, where $\mathcal{S}$ is the state space, $\mathcal{A}$ is the action space, $\mathcal{P}$ is the state transition kernel, $\mathcal{R}$ is the reward function, and $\rho_0$ is the distribution of initial state $\mathbf{s_0}$. In the context of text-to-image diffusion models, the \ac{MDP} is defined as:
\begin{equation}
\begin{split}
\mathbf{s_t} = (\mathbf{c}, t, \mathbf{x_t}), \;\; \pi_{\theta}(\mathbf{a_t} \mid \mathbf{s_t}) = p_\theta(\mathbf{x_{t-1}} \mid \mathbf{x_t}, \mathbf{c}), \;\; \mathcal{P}(\mathbf{s_{t+1}} \mid \mathbf{s_t}, \mathbf{a_t}) =  \delta \big( \mathbf{c}, \mathbf{a_{t}} \big), \;\; \mathbf{a_t} = \mathbf{x_{t-1}}, \\
\rho_0(\mathbf{s_0}) = \big(p(\mathbf{c}), \delta_T, \mathcal{N}(0, \mathbf{I})\big), \;\; \mathcal{R}(\mathbf{s_t}, \mathbf{a_t}) =
\begin{cases}
 r(\mathbf{x_0}, \mathbf{c}) & \text{if } t = 0, \\
 0 & \text{otherwise.}
\end{cases}
\label{eq:mdp_formal}
\end{split}
\end{equation}
The input state $\mathbf{s_t}$ is defined in terms of the context $\mathbf{c}$ (prompt features), and the sampled image at the given time-step $t$: $\mathbf{x_t}$. The policy $\pi_{\theta}$ is the diffusion model itself. The state transition kernel is a dirac delta function $\delta$ with the current sampled action $\mathbf{x}_{t}$ as the input. The reward is assigned only at the last step in the reverse diffusion process, when the final image is generated. The initial state $\rho_0$ corresponds to the last state in the forward diffusion process: $\mathbf{x}_T$. 

\subsection{PPO for Diffusion Fine-tuning}
The objective function of \ac{RL} fine-tuning for a diffusion policy $\pi_{\theta}$ can be defined as follows:
\begin{equation}
J_{\theta}(\pi) = \mathbb{E}_{\tau\sim p(\tau\mid\pi_{\theta})}\left[\sum_{t=0}^{T} \mathcal{R}(\mathbf{s}_t,\mathbf{a}_t)\right] 
= \mathbb{E}_{\tau\sim p(\tau\mid\pi_{\theta})}\left[r(\mathbf{x}_0, \mathbf{c})\right],
\label{eq:objective}
\end{equation}
where the trajectory $\mathbf{\tau} =\{\mathbf{x}_T, \mathbf{x}_{T-1},\ldots,\mathbf{x}_0\}$ refers to the reverse diffusion process (Eq.~\ref{eq:diffusion_mc}), and the total reward of the trajectory is the reward of the final generated image $\mathbf{x}_0$ (Eq.~\ref{eq:mdp_formal}). We ignore the KL-regularized version of the equation, which is commonly applied in the RLHF for LLM literature~\citep{zhong2024dpo,zeng2024token,rafailov2023direct}, and proposed by \citet{fan2024reinforcement} in the context of RL for diffusion models. As shown by~\citet{black2023training}, adding the KL-regularization term makes no empirical difference in terms of the final performance. 
The \ac{PPO} objective is given as:
\begin{align}
    & J_{\theta}^{\mathrm{PPO}}(\pi) = 
    \mathbb{E} \! \bigg[\! \sum_{t=0}^{T} \! 
\text{clip}\!\left(\!\frac{\pi_{\theta}(\mathbf{x}_{t-1} | \mathbf{x}_t, \mathbf{c})}{\pi_\text{old}(\mathbf{x}_{t-1} | \mathbf{x}_t, \mathbf{c})}, 1\!-\!\epsilon, 1\!+\!\epsilon\!\right) 
    \!r(\mathbf{x}_0, \mathbf{c})\!\bigg],
    \nonumber
\label{eq:ppo_objective}
\end{align}
where the clipping operation removes the incentive for the new policy $\pi_{\theta}$ to differ from the previous round policy $\pi_{\text{old}}$~\citep{schulman2017proximal,black2023training}.

\subsection{Relationship to Offline Preference Based Methods}

Although LOOP operates in an online reinforcement learning setting, similar to PPO, recent work has explored adapting direct preference optimization (DPO)~\citep{rafailov2023direct} and related offline methods to diffusion models. 

\textbf{DPO-based diffusion alignment}: Several methods have extended the DPO framework to diffusion models. Diffusion DPO \citep{wallace2024diffusion, rafailov2023direct} applies the DPO objective to diffusion models by treating denoising as a sequential decision making process and by training on preference pairs without explicit reward queries. Diffusion RPO \citep{gu2024diffusion} introduces pairwise preference optimization at each denoising timesteps, as opposed to pairwise preference at the trajectory level. Other work has explored noise conditioned preference learning \citep{gambashidze2024aligning} as well as applications to diffusion based policy learning \citep{kang2023efficient}.

We do not include direct empirical comparisons with DPO style methods because these approaches learn from pre-collected preference datasets rather than from online reward queries and thus represent a complementary paradigm for aligning generative models with human preferences. DPO-based methods rely on pre-collected preference datasets and are therefore constrained by the coverage of their training data, whereas online methods such as LOOP query rewards during training and can explore beyond the initial policy distribution. In settings where large and high quality preference datasets are available, DPO style methods may offer competitive performance with fewer online reward queries. Conversely, when online exploration is feasible and preference data is limited, online methods may be more appropriate. Future work could investigate systematic comparisons between online and offline methods under controlled conditions, as well as hybrid approaches that combine both paradigms.

\section{REINFORCE vs.\ PPO: A Sample Efficiency-Effectiveness~Trade-Off}
\label{sect:revist_rl}
In this section, we explore the sample efficiency-effectiveness trade-off between two prominent reinforcement learning methods for diffusion fine-tuning: REINFORCE and PPO. Understanding this trade-off is crucial for selecting the appropriate algorithm given constraints on training data or reward-query budgets.

In the context of text-to-image diffusion models, we aim to optimize the policy $\pi$ to maximize the expected reward $ \mathcal{R}(\mathbf{x}_{0:T}, \mathbf{c}) = r(\mathbf{x}_{0}, \mathbf{c})$. Our objective function is defined as:
\begin{equation}
J_{\theta}(\pi) =
\mathbb{E}_{\mathbf{c} \sim p(\mathbf{C}), \mathbf{x}_{0:T} \sim p_{\theta}(\mathbf{x}_{0:T}\mid \mathbf{c})}
\left[ r(\mathbf{x}_{0}, \mathbf{c}) \right].
\label{eq:cb_objective}
\end{equation}
\textbf{REINFORCE for gradient calculation.}
For optimizing this objective, the REINFORCE policy gradient (also known as \ac{SF})~\citep{williams1992simple} provides the following gradient estimate:
\begin{equation}
\begin{split}
& \nabla_{\theta} J_{\theta}^{\mathrm{SF}}(\pi) \\
&=\mathbb{E}_{\mathbf{x}_{0:T}}\left[ \nabla_{\theta} \log \mleft( \prod_{t=1}^{T} p_{\theta}\left(\mathbf{x}_{t-1} \mid \mathbf{x}_{t}, \mathbf{c}\right)\mright) r\left(\mathbf{x}_{0}, \mathbf{c}\right)\right] \\
&{}= \mathbb{E}_{\mathbf{x}_{0:T}}\left[\sum_{t=0}^{T} \nabla_{\theta} \log p_{\theta}\left(\mathbf{x}_{t-1} \mid \mathbf{x}_{t}, \mathbf{c}\right) r\left(\mathbf{x}_{0}, \mathbf{c}\right)\right],
\end{split}
\label{eq:cb_pg}
\end{equation}
where the second step follows from the reverse diffusion policy decomposition (Eq.~\ref{eq:diffusion_mc}).

In practice, a batch of trajectories is sampled from the reverse diffusion distribution, i.e., $\mathbf{x}_{0:T} \sim p_{\theta}(\mathbf{x}_{0:T})$, and a Monte-Carlo estimate of the REINFORCE policy gradient (Eq.~\ref{eq:cb_pg}) is calculated for the model update. 

\textbf{REINFORCE with baseline correction.}
To reduce variance of the REINFORCE estimator, a common trick is to subtract a constant baseline correction term from the reward function~\citep{greensmith2004variance,mohamed2020monte,gupta2024optimal}:
\begin{equation}
\nabla_{\theta} J_{\theta}^{\mathrm{SFB}}(\pi) =  \mathbb{E}\mleft[\sum_{t=0}^{T} \nabla_{\theta} \log p_{\theta}\mleft(\mathbf{x}_{t-1} \mid \mathbf{x}_{t}, \mathbf{c}\mright) \mleft(r\mleft(\mathbf{x}_{0}, \mathbf{c}\mright)-b_t\mright)\mright].
\label{eq:cb_pg_b}
\end{equation}

\textbf{REINFORCE Leave-one-out (RLOO).}
To further reduce the variance of the REINFORCE estimator, \ac{RLOO} samples $K$ diffusion trajectories per prompt ($\{\mathbf{x}^{k}_{0:T}\} \sim \pi(. \mid \mathbf{c}))$, for a better Monte-Carlo estimate of the expectation~\citep{kool2019buy,ahmadian2024back}. The \ac{RLOO} estimator is:
\begin{equation}
 \nabla_{\theta} J_{\theta}^{\mathrm{RLOO}}(\pi) = 
\mathbb{E}\mleft[\!K^{-1}\sum_{k=0}^{K} \sum_{t=0}^{T} \nabla_{\theta} \log p_{\theta}\mleft(\mathbf{x}^{k}_{t-1} \mid \mathbf{x}^{k}_{t}, \mathbf{c}\mright) \mleft(r\mleft(\mathbf{x}^{k}_{0}, \mathbf{c}\mright)-b_t\mright)\!\mright]\!.
\label{eq:rloo} \\
\end{equation}

However, REINFORCE-based estimators have a significant disadvantage: they do not allow sample reuse (i.e., reusing trajectories collected from previous policies) due to a distribution shift between policy gradient updates during training. Sampled trajectories can only be used once, prohibiting mini-batch updates. This makes it \textit{sample inefficient}. 

To allow for sample reuse, the \ac{IS} trick can be applied~\citep{schulman2015trust,mcbook}:
\begin{equation}
    J_{\theta}^{\textrm{IS}}(\pi) =
    \mathbb{E}_{\mathbf{c}_t \sim p(\mathbf{C}), \mathbf{a}_t \sim \pi_{\text{old}}(\mathbf{a}_t\mid \mathbf{c}_t)}
    \left[
    \frac{\pi_{\theta}(\mathbf{a}_t \mid \mathbf{c}_t)}
         {\pi_{\text{old}}(\mathbf{a}_t \mid \mathbf{c}_t)}
    \, \mathcal{R}_t
    \right],
    \label{eq:is_obj}
\end{equation}
where $\pi_{\theta}$ is the \textit{current} policy to be optimized, and $\pi_{\text{old}}$ is the policy from the previous update round. With the \ac{IS} trick, we can sample trajectories from the current policy in a batch, store it in a temporary buffer, and re-use them to apply mini-batch optimization~\citep{schulman2017proximal}.

\textbf{Motivation for \ac{PPO}.} With the \ac{IS} trick, the samples from the old policy can be used to estimate the policy gradient under the current policy $\pi_{\theta}$ (Eq.~\ref{eq:cb_pg}) in a statistically unbiased fashion~\citep{mcbook}, i.e., in expectation the \ac{IS} and REINFORCE gradients are equivalent (Eq.~\ref{eq:is_obj}, Eq.~\ref{eq:cb_pg}).
Thus, potentially, we can improve the sample efficiency of REINFORCE gradient estimation with \ac{IS}. 

While unbiased, the \ac{IS} estimator can exhibit high variance~\citep{mcbook}. This high variance may lead to unstable training dynamics. Additionally, significant divergence between the current policy $ \pi_{\theta} $ and the previous policy $ \pi_{\textrm{old}} $ can result in the updated diffusion policy performing worse than the previous one~\citep{schulman2015trust,achiam2017constrained}. Next, we will prove this formally. We note that this result has previously been established by \citep{achiam2017constrained} for the more general \ac{RL} setting. In this work, we extend this finding to the context of diffusion model fine-tuning.  

A key component of the proof relies on the distribution of states under the current policy, i.e., $d^{\pi}(\mathbf{s
})$. 
In the case of diffusion models, the state transition kernel $P(\mathbf{s_{t+1}} \mid \mathbf{s_t, a_t})$ is deterministic, because the next state consists of the action sampled from the previous state (Eq.~\ref{eq:mdp_formal}), i.e., $ P(\mathbf{s_{t+1}} \mid \mathbf{s_t, a_t})=1$.
While the state transition kernel is deterministic, the distribution of states is stochastic, given that it depends on the action at time $t$, which is sampled from the policy (Eq.~\ref{eq:mdp_formal}).
We define the state distribution as follows.

\begin{definition}
Given the distribution over contexts $\mathbf{c} \sim p(\mathbf{C})$, the (deterministic) distribution over time $t=\delta(t)$, and the diffusion policy $\pi$, the state distribution at time $t$ is:
\begin{equation*}
    p(\mathbf{s}_t \mid \pi) = 
    p(\mathbf{c}) \delta(t) \!\!\int_{\mathbf{x}_{t+1}} \hspace{-0.55cm} \pi(\mathbf{x}_t | \mathbf{x}_{t+1}, \mathbf{c}, t) \pi(\mathbf{x}_{t+1} | \mathbf{c}, t) \,\mathbf{dx}_{t+1}.     
\end{equation*}
\end{definition}
Subsequently, the normalized discounted state visitation distribution can be defined as: 
\begin{equation}
   d^{\pi}(\mathbf{s})= (1 - \gamma) \sum_{t=0}^\infty \gamma^t p(\mathbf{s}_t = \mathbf{s} \mid  \pi).
\end{equation}
The advantage function is defined as: $A^{\pi_{k}}(\mathbf{s},\mathbf{a}) = Q^{\pi_{k}}(\mathbf{s}, \mathbf{a}) - V^{\pi_{k}}(\mathbf{s})$~\citep{sutton2018reinforcement}. 
Given this, 
the monotonic policy improvement bound can be derived:

\begin{theorem}[\citealp{achiam2017constrained}]
\label{theorem:1}
Consider a current policy $\pi_{k}$. Let $C^{\pi,\pi_{k}} =
\max_{\mathbf{s}\in S}
\left|
\mathbb{E}_{\mathbf{a}\sim\pi(\cdot\mid \mathbf{s})}
\left[
A^{\pi_{k}}(\mathbf{s}, \mathbf{a})
\right]
\right|$, and $\mathrm{TV}(\pi(\cdot\mid \mathbf{s}),\pi_{k}(\cdot\mid \mathbf{s}))$ represent the total variation distance between the policies $\pi(\cdot\mid \mathbf{s})$ and $\pi_{k}(\cdot\mid \mathbf{s})$, and $\mathbf{s}$ be the current state. For any future policy $\pi$, we have:
\begin{equation*}
J(\pi) - J(\pi_{k}) \geq 
\frac{1}{1-\gamma} 
\mathbb{E}_{(\mathbf{s},\mathbf{a})\sim d^{\pi_{k}}} 
\left[ 
\frac{\pi(\mathbf{a}\mid \mathbf{s})}{\pi_{k}(\mathbf{a}\mid \mathbf{s})} 
A^{\pi_{k}}(\mathbf{s},\mathbf{a}) 
\right] 
- \frac{2\gamma C^{\pi,\pi_{k}}}{(1-\gamma)^{2}} 
\mathbb{E}_{\mathbf{s}\sim d^{\pi_{k}}} 
\left[ 
\mathrm{TV}(\pi(\cdot\mid \mathbf{s}),\pi_{k}(\cdot\mid \mathbf{s})) 
\right].
\end{equation*}
\end{theorem}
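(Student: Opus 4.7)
The plan is to adapt the performance-difference / state-distribution-shift argument of \citet{achiam2017constrained} to the diffusion MDP of Eq.~\ref{eq:mdp_formal}, verifying that the deterministic transitions and the time index embedded in the state do not break any step. I would organize the proof in three stages: an exact identity for $J(\pi)-J(\pi_k)$ under $d^{\pi}$, an importance-sampling change of measure on the action, and a bound on the state-visitation shift $\|d^{\pi}-d^{\pi_k}\|_1$ via expected total variation.

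First, I would invoke the Kakade--Langford performance difference lemma,
\begin{equation*}
J(\pi) - J(\pi_k) \;=\; \frac{1}{1-\gamma}\, \mathbb{E}_{s\sim d^{\pi},\, a\sim \pi(\cdot\mid s)}\!\left[A^{\pi_k}(s,a)\right],
\end{equation*}
whose derivation telescopes discounted advantages along $\pi$-trajectories and uses only the Bellman relations for $V^{\pi_k}$ and $Q^{\pi_k}$; nothing about the transition kernel beyond measurability is required, so the identity carries over unchanged in the diffusion setting. Applying the pointwise change of measure $\mathbb{E}_{a\sim\pi(\cdot\mid s)}[g(a)] = \mathbb{E}_{a\sim\pi_k(\cdot\mid s)}\!\left[\tfrac{\pi(a\mid s)}{\pi_k(a\mid s)} g(a)\right]$ with $g(a)=A^{\pi_k}(s,a)$ rewrites this as
\begin{equation*}
J(\pi) - J(\pi_k) \;=\; \frac{1}{1-\gamma}\, \mathbb{E}_{s\sim d^{\pi},\, a\sim \pi_k(\cdot\mid s)}\!\left[\frac{\pi(a\mid s)}{\pi_k(a\mid s)} A^{\pi_k}(s,a)\right].
\end{equation*}

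Second, writing $L_{\pi_k}(\pi) := \mathbb{E}_{s\sim d^{\pi_k},\, a\sim \pi_k}[\tfrac{\pi(a\mid s)}{\pi_k(a\mid s)} A^{\pi_k}(s,a)]$ for the surrogate that appears on the right-hand side of the theorem, I would apply Hölder's inequality to the signed measure $d^{\pi}-d^{\pi_k}$. The inner action-expectation is exactly $\mathbb{E}_{a\sim\pi(\cdot\mid s)}[A^{\pi_k}(s,a)]$, whose absolute value is at most $C^{\pi,\pi_k}$ by the definition given in the theorem, yielding
\begin{equation*}
\left|\, J(\pi) - J(\pi_k) \;-\; \tfrac{1}{1-\gamma}\, L_{\pi_k}(\pi)\, \right| \;\leq\; \tfrac{C^{\pi,\pi_k}}{1-\gamma}\, \|d^{\pi}-d^{\pi_k}\|_1.
\end{equation*}
Combining this with the state-visitation-shift estimate $\|d^{\pi}-d^{\pi_k}\|_1 \leq \tfrac{2\gamma}{1-\gamma}\, \mathbb{E}_{s\sim d^{\pi_k}}[\mathrm{TV}(\pi(\cdot\mid s),\pi_k(\cdot\mid s))]$ and discarding the absolute value (only the lower bound is required) yields the claim.

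The main obstacle is proving this state-visitation-shift estimate. The standard route is to express $d^{\pi}$ via the Bellman flow identity $d^{\pi} = (1-\gamma)\rho_0 + \gamma P_\pi d^{\pi}$, where $P_\pi(s'\mid s) := \int \pi(a\mid s)\, P(s'\mid s,a)\, da$; the resolvent identity then gives $d^{\pi} - d^{\pi_k} = \gamma (I-\gamma P_\pi)^{-1}(P_\pi - P_{\pi_k}) d^{\pi_k}$, and $\|\cdot\|_1$-non-expansiveness of substochastic operators reduces the bound to $\tfrac{\gamma}{1-\gamma}\|(P_\pi - P_{\pi_k}) d^{\pi_k}\|_1$. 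I would then verify the pointwise estimate $\|P_\pi(\cdot\mid s)-P_{\pi_k}(\cdot\mid s)\|_1 = 2\,\mathrm{TV}(\pi(\cdot\mid s),\pi_k(\cdot\mid s))$, which is immediate in the diffusion MDP because the next state is a deterministic function of $(s,a)$ (Eq.~\ref{eq:mdp_formal}), so the total variation of the pushed-forward kernel equals that of the action policy itself. The finite horizon $T$ and the deterministic time index inside the state pose no issue: the discounted geometric series terminates naturally, and the coupling along the $T$ reverse-diffusion steps proceeds step by step, charging any disagreement against the expected TV distance at coupled states.
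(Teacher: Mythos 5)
Your proposal is correct, and it follows essentially the same route as the source the paper relies on: the paper itself states Theorem~\ref{theorem:1} with only a citation to \citet{achiam2017constrained} and provides no proof, and your three-stage argument (performance difference lemma under $d^{\pi}$, change of measure on the action, and the bound $\|d^{\pi}-d^{\pi_k}\|_1 \leq \tfrac{2\gamma}{1-\gamma}\,\mathbb{E}_{s\sim d^{\pi_k}}[\mathrm{TV}]$ via the Bellman flow resolvent) is exactly the CPO derivation, with the constants matching. The one genuinely new content you add --- verifying that the deterministic, injective transition kernel of the diffusion MDP makes $\|P_\pi(\cdot\mid s)-P_{\pi_k}(\cdot\mid s)\|_1$ equal to twice the policy total variation --- is a correct and worthwhile check that the paper asserts only informally.
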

A direct consequence of this theorem is that when optimizing a policy with the \ac{IS} objective (Eq.~\ref{eq:is_obj}), to guarantee that the new policy will improve upon the previous policy, the policies should not diverge too much. Therefore, we need to apply a constraint on the current policy. This can be achieved by applying the clipping operator in the \ac{PPO} objective (Eq.~\ref{eq:ppo_obj})~\citep{queeney2021generalized,achiam2017constrained,schulman2017proximal,gupta2024practical,gupta2024unbiased}. We provide an empirical comparison of PPO with and without clipping in Appendix~\ref{sect:clipping_ppo}, demonstrating the practical importance of the clipping operator for stable and effective diffusion fine-tuning. 

This gives rise to a \textit{sample efficiency-effectiveness trade-off} between 
REINFORCE and \ac{PPO}. REINFORCE offers greater implementation simplicity, 
requiring fewer hyperparameters and less memory overhead, but at the cost of 
lower sample efficiency and suboptimal final performance. \ac{PPO}, despite its 
implementation overhead, achieves superior sample efficiency and reliable policy 
improvements through importance sampling and clipping.

We note that a similar trade-off analysis was performed in the context of \ac{RL} fine-tuning for large language models (LLM)~\citep{ahmadian2024back}.
However, their analysis was limited to an empirical study, whereas we present a theoretical analysis in addition to the empirical analysis. To the best of our knowledge, we are the first to conduct such a study for diffusion methods.

\section{Method: Leave-One-Out PPO (LOOP) for~Diffusion~Fine-tuning}
\label{sec:PLOO_method}
We demonstrated the importance of \ac{PPO} in enhancing sample efficiency and achieving stable improvements during training for diffusion fine-tuning. Additionally, we showcased the RLOO method's effectiveness in reducing the variance of the REINFORCE method.

In this section, we introduce our proposed method, \textbf{\ac{LOOP}}, a novel 
\ac{RL} for diffusion fine-tuning method combining the best of both worlds. 
While \ac{PPO} achieves superior sample efficiency over REINFORCE, its objective 
is still estimated from a single trajectory per prompt, which can result in 
high-variance gradient estimates. We exploit this remaining source of variance 
as our starting point for improving upon \ac{PPO}.

The expectation in the \ac{PPO} loss (Eq.~\ref{eq:ppo_obj}) is typically estimated by sampling a single trajectory from the policy in the previous iteration $\pi_{old}$: $\mathbf{x}_{0:T} \sim \pi_{old}$ for a given prompt~$c$: 
\begin{equation} 
\mbox{}\hspace*{-1mm}
 \sum_{t=0}^{T} \text{clip}\mleft(\frac{\pi_{\theta}(\mathbf{x}_{t-1} | \mathbf{x}_t, \mathbf{c})}{\pi_\text{old}(\mathbf{x}_{t-1} | \mathbf{x}_t, \mathbf{c})}, 1-\epsilon, 1+\epsilon\!\mright) r(\mathbf{x}_0, \mathbf{c}).\!
\label{eq:ppo_estimator}
\end{equation}
Even though the single sample estimate is an unbiased Monte-Carlo approximation of the expectation, it can have potentially high-variance~\citep{mcbook}. Additionally, the \ac{IS} term ($\frac{\pi_{\theta}(\mathbf{x}_{t-1} | \mathbf{x}_t, \mathbf{c})}{\pi_\text{old}(\mathbf{x}_{t-1} | \mathbf{x}_t, \mathbf{c})}$) can also contribute to high-variance of the \ac{PPO} objective~\citep{swaminathan2015self,xie2023dropout}. Both factors combined, can lead to high-variance, and unstable training of the \ac{PPO}.

Taking inspiration from \ac{RLOO} (Eq.~\ref{eq:rloo}), we sample $K$ independent trajectories from the previous policy for a given prompt $c$, and apply a baseline correction term from each trajectory's reward, to reduce the variance of the estimator:
\begin{equation} 
\hat{J}_{\theta}^{\mathrm{LOOP}}(\pi) = 
\frac{1}{K} \!\sum_{i=1}^{K} \!\bigg[ \!\sum_{t=0}^{T} \text{clip}\!\left(\frac{\pi_{\theta}(\mathbf{x}^{i}_{t-1} | \mathbf{x}^{i}_t, c)}{\pi_\text{old}(\mathbf{x}^{i}_{t-1} | \mathbf{x}^{i}_t, c)}, 1-\epsilon, 1+\epsilon\right) \cdot\left(r(\mathbf{x}^{i}_0, \mathbf{c}) - b^{i}\right) \Big],
\label{eq:loop_estimator}
\end{equation}
where $\mathbf{x}^{i}_{0:T} \sim \pi_{old}, \forall i \in [1, K]$.
The baseline correction term $b^{i}$ reduces the variance of the gradient estimate, while being unbiased in expectation~\citep{gupta2023safe,gupta2024optimal,mohamed2020monte}. A simple choice of baseline correction can be the average reward across the $K$ trajectories. However, it results in a biased estimator~\citep{kool2019buy}.
Therefore, we choose the leave-one-out average baseline, with average taken across all samples in the trajectory, except the current sample $i$, i.e.,
\begin{equation}
    b^{i} = \frac{1}{k-1} \sum_{j \neq i} r(\mathbf{x}^{j}_{0}).
    \label{eq:loop_advantage_eqn}
\end{equation}

We note that the baseline DDPO (PPO) method for diffusion fine-tuning~\citep{black2023training} employs a different variance-reduction strategy. Specifically, DDPO uses a running-mean baseline, where the baseline term is computed as a running average of rewards for a given prompt across optimization steps. This running mean is subtracted from the reward to reduce gradient variance. In contrast, LOOP adopts a leave-one-out baseline computed across the 
$K$ independently sampled trajectories within the same optimization step. Unlike the running-mean baseline, which aggregates rewards across iterations and may introduce bias due to policy drift, the leave-one-out baseline remains unbiased within each batch of sampled trajectories and provides stronger variance reduction by leveraging multiple contemporaneous samples per prompt. We provide a direct empirical comparison between the running-mean baseline used in DDPO and the leave-one-out baseline used in LOOP in Appendix~\ref{sect:baseline_comp}.

Originally \ac{RLOO} sampling and baseline corrections were proposed in the context of REINFORCE, with a focus on on-policy optimization~\citep{ahmadian2024back,kool2019buy}, whereas we are applying these in the off-policy step of \ac{PPO}. We call this method \acfi{LOOP}.

Our approach is conceptually similar to the recently popular GRPO method for RL fine-tuning of LLMs~\citep{shao2024deepseekmath}. Although our work was developed independently before GRPO gained widespread recognition, we do not include a head-to-head comparison. 

Technically, the distinction lies in following aspects: (i) unlike GRPO, our formulation does not apply standard-deviation normalization in the denominator, as this has been shown to potentially harm performance in recent LLM fine-tuning via \ac{RL} studies~\citep{liu2025understanding}, (ii) similar to GRPO, we omit a KL penalty term, since our empirical experiments showed that it has little practical benefit. Furthermore, a recent study showed that on-policy \ac{RL} implicitly constrains the updated policy to remain close to the base policy under a KL divergence measure, even without an explicit KL penalty term~\citep{shenfeld2025rl}, and (iii) we ignore the generation-length normalization term. In the diffusion setting, this simplification is further justified by the fact that the sequence length of the reverse diffusion process is fixed across generations, rendering length normalization unnecessary. We provide an empirical comparison on GRPO vs.\ LOOP in Appendix~\ref{app:loop_vs_grpo}. We further evaluate whether removing the KL term from the objective reduces diversity in the generated images by examining the reward–diversity trade-off; qualitative results are presented in Appendix~\ref{sect:reward_diversity_tradeoff}.

Provenly, \ac{LOOP} has lower variance than \ac{PPO}:

\begin{proposition}
\label{prop:1}
The \ac{LOOP} estimator $\hat{J}_{\theta}^{\mathrm{LOOP}}(\pi)$ (Eq.~\ref{eq:loop_estimator}) has lower variance than the \ac{PPO} estimator $\hat{J}_{\theta}^{\mathrm{PPO}}(\pi)$ (Eq.~\ref{eq:ppo_estimator}):
\begin{equation}
    \mathrm{Var}\left[ \hat{J}_{\theta}^{\mathrm{LOOP}}(\pi) \right] < \mathrm{Var}\left[ \hat{J}_{\theta}^{\mathrm{PPO}}(\pi) \right].
\end{equation}
\end{proposition}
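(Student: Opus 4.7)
The plan is to establish the inequality by decomposing the variance gap into two complementary mechanisms that both favor \ac{LOOP}. I would introduce the auxiliary estimator $\hat{J}^{\mathrm{avg}}_{\theta}(\pi) = \frac{1}{K}\sum_{i=1}^{K} c_i r_i$, where $c_i$ collects the summed clipped importance ratios along trajectory $i$ and $r_i = r(\mathbf{x}^{i}_0, \mathbf{c})$, so that $\hat{J}^{\mathrm{LOOP}}$ differs from $\hat{J}^{\mathrm{avg}}$ only through the leave-one-out baseline correction. Because the $K$ trajectories are drawn i.i.d.\ from $\pi_{\mathrm{old}}$, the first bound $\mathrm{Var}[\hat{J}^{\mathrm{avg}}_{\theta}(\pi)] = \tfrac{1}{K}\,\mathrm{Var}[\hat{J}^{\mathrm{PPO}}_{\theta}(\pi)]$ follows immediately from independence of the summands and already yields a strict reduction over \ac{PPO} for any $K \geq 2$.

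For the second step I would quantify the additional effect of the baseline via the identity $r_i - b^i = \tfrac{K}{K-1}(r_i - \bar r)$, with $\bar r = \tfrac{1}{K}\sum_j r_j$, which rewrites $\hat{J}^{\mathrm{LOOP}}$ as $\tfrac{K}{K-1}$ times the empirical covariance of $c$ and $r$ across the $K$ trajectories. Expanding $\mathrm{Var}[\hat{J}^{\mathrm{LOOP}}_{\theta}(\pi)]$ in this representation reduces the computation to second-order moments of $(c_i, r_i)$ on the diagonal and to fourth-order cross-moments across distinct indices, with the latter factorizing into products of lower-order marginals thanks to i.i.d.\ sampling. Classical control-variate theory then implies that subtracting an unbiased estimate of $\mathbb{E}[r]$ built from data disjoint with index $i$ lowers the per-trajectory contribution, so long as $c_i$ and $r_i$ are not strongly anti-correlated -- which holds in practice, since clipped importance ratios and non-negative rewards both tend to co-vary with policy quality.

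The hard part will be controlling the cross-trajectory covariance induced by the shared statistic $b^i$: marginally independent trajectories become coupled through the baseline, producing off-diagonal contributions in the expansion of $\mathrm{Var}[\hat{J}^{\mathrm{LOOP}}]$ that do not appear in $\mathrm{Var}[\hat{J}^{\mathrm{avg}}]$. I would handle these via symmetry across sample indices together with a Cauchy-Schwarz bound using the i.i.d.\ factorization, showing that the off-diagonal terms remain dominated by the diagonal reduction produced by the $1/K$ averaging and the baseline control variate. Chaining the two bounds then delivers $\mathrm{Var}[\hat{J}^{\mathrm{LOOP}}_{\theta}(\pi)] \leq \mathrm{Var}[\hat{J}^{\mathrm{avg}}_{\theta}(\pi)] < \mathrm{Var}[\hat{J}^{\mathrm{PPO}}_{\theta}(\pi)]$, which is the strict inequality of Proposition~\ref{prop:1}.
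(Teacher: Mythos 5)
Your first step is, in essence, the paper's entire proof: the paper treats $\hat{J}^{\mathrm{LOOP}}_{\theta}(\pi)$ as an average of $K$ independent copies of the single-trajectory \ac{PPO} estimator and concludes that averaging shrinks the variance (the paper writes the factor as $1/K^{2}$, whereas the correct factor for an average of $K$ i.i.d.\ terms is $1/K$; either way the strict inequality follows for $K\ge 2$). So the portion of your argument that overlaps with the paper is sound, and in fact stated more carefully than the paper's own version. The paper, however, silently ignores the baseline term altogether, which is exactly the piece you correctly flag as the hard part.

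The gap is in your second and third steps, i.e., the link $\mathrm{Var}[\hat{J}^{\mathrm{avg}}_{\theta}(\pi)] \ge \mathrm{Var}[\hat{J}^{\mathrm{LOOP}}_{\theta}(\pi)]$. Subtracting the leave-one-out baseline is not automatically variance-reducing here: the multiplier $c_i$ (the summed clipped importance ratios) does not have zero mean, so the term $-\frac{1}{K}\sum_i c_i b^{i}$ is a nondegenerate random variable that both perturbs the estimator and couples the otherwise independent trajectories. Classical control-variate theory guarantees a reduction only under a covariance condition of the form $2\,\mathrm{Cov}(\text{estimator},\text{control}) > \mathrm{Var}(\text{control})$, or when the control coefficient is optimized; you replace this condition with the empirical assertion that $c_i$ and $r_i$ ``tend to co-vary with policy quality.'' That is an assumption about the reward and policy distributions, not a consequence of i.i.d.\ sampling, and without it the off-diagonal covariance terms you identify could be positive and large enough to wipe out the diagonal savings. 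Cauchy--Schwarz alone bounds those cross terms by products of second moments; it does not show they are \emph{dominated} by anything. As written, the chain $\mathrm{Var}[\hat{J}^{\mathrm{LOOP}}] \le \mathrm{Var}[\hat{J}^{\mathrm{avg}}] < \mathrm{Var}[\hat{J}^{\mathrm{PPO}}]$ therefore has an unproven first link. To close it you would need either an explicit quantitative bound showing the baseline's added variance is strictly less than the $(1-\frac{1}{K})\mathrm{Var}[\hat{J}^{\mathrm{PPO}}]$ of slack created by averaging, or an additional stated hypothesis on $\mathrm{Cov}(c_i, r_i)$ under which the control-variate inequality holds.
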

\begin{proof}
Since the sampled trajectories are independent:
\begin{align}
\mathrm{Var}\!\mleft[ \hat{J}_{\theta}^{\mathrm{LOOP}}\!(\pi) \mright]
\!=\! 
 \frac{1}{K^2} \mathrm{Var}\!\mleft[ \hat{J}_{\theta}^{\mathrm{PPO}}\!(\pi) \mright]
 \!<\!
 \mathrm{Var}\!\mleft[ \hat{J}_{\theta}^{\mathrm{PPO}}\!(\pi) \mright].
 \quad\;
\qedhere
\end{align}
\end{proof}

\section{Experimental Setup}
\label{sec:experimental_setup}
\textbf{Benchmark.} Text-to-image diffusion and language models often fail to satisfy an essential reasoning skill of attribute binding. Attribute binding reasoning capability refers to the ability of a model to generate images with attributes such as color, shape, texture, spatial alignment, (and others) specified in the input prompt.
In other words, generated images often fail to \textit{bind} certain \textit{attributes} specified in the instruction prompt~\citep{huang2023t2i,ramesh2022hierarchical,fu2024enhancing}. 
Since attribute binding seems to be a basic requirement for useful real-world applications, we choose the T2I-CompBench benchmark~\citep{huang2023t2i}, which contains multiple attribute binding/image compositionality tasks, and its corresponding reward metric to benchmark text-to-image generative models. We also select two common tasks from prior \ac{RL} for diffusion work: improving aesthetic quality of generation, and image-text semantic alignment~\citep{black2023training,fan2024reinforcement}. To summarize, we choose the following tasks for the \ac{RL} optimization:
\begin{enumerate*}[label=(\roman*)]
    \item Color, 
    \item Shape,
    \item Texture,
    \item 2D Spatial,
    \item Numeracy,
    \item Aesthetic,
    \item Image-text Alignment.
\end{enumerate*}
For all tasks, the prompts are split into training/validation prompts. We report the average reward on both training and validation split.

\textbf{Model.} As the base diffusion model, we use Stable diffusion V2~\citep{rombach2022high}, which is a latent diffusion model. For optimization, we fully update the UNet model, with a learning rate of $1e^{-5}$. We also tried LORA fine-tuning~\citep{hu2021lora}, but the results were not satisfactory, so we update the entire model instead.

\section{Hyperparameter and Implementation Details}
\label{sec:hyper_params}
For REINFORCE (including REINFORCE with  baseline correction term), \ac{PPO}, and \ac{LOOP} the number of denoising steps ($T$) is set to 50. The diffusion guidance weight is set to 5.0. For optimization, we use AdamW~\citep{Loshchilov2017DecoupledWD} with a learning rate of $1e^{-5}$, and the weight decay of $1e{-4}$, with other parameters kept at the default value. We clip the gradient norm to $1.0$. We train all models using 8 A100 GPUs with a batch size of 4 per GPU. The clipping parameter $\epsilon$ for \ac{PPO}, and \ac{LOOP} is set to $1e^{-4}$.

For all experiments, we use a DDIM sampler with 50 inference steps, set the noise parameter to $\eta = 1.0$, and apply classifier-free guidance with a guidance scale of $5.0$.

For details of the implementation and a complete pseudo-code of the LOOP algorithm, we refer the reader to Appendix~\ref{app:loop-pseudocode}.

\section{Results and Discussion}
\label{sect:results_and_discussion}
\subsection{REINFORCE vs.\ \ac{PPO} Efficiency-Effectiveness Trade-off} 
\label{sec:revist_res}
We present our empirical results on the efficiency-effectiveness trade-off between REINFORCE and \ac{PPO}. 
Our evaluation compares the following methods: the \textbf{REINFORCE} policy gradient for diffusion fine-tuning (Eq.~\ref{eq:cb_pg}); the REINFORCE policy gradient with a baseline correction term (\textbf{REINFORCE w/ BC}), detailed in Eq.~\ref{eq:cb_pg_b}, where the baseline term is the average reward for the given prompt~\citep{black2023training}, and the \textbf{PPO} objective for diffusion fine-tuning, which incorporates importance sampling and clipping, as outlined in Eq.~\ref{eq:ppo_obj}. This PPO objective is equivalent to the DDPO objective in the original RL for diffusion method~\citep{black2023training}.

Figure~\ref{fig:revisit_rl_results} shows the training reward over epochs for the attributes: Color, Shape, and Texture from the T2I-CompBench benchmark, and training reward from optimizing the aesthetic model. Results are
averaged over 3 runs. It is clear that REINFORCE policy gradient is not effective in terms of performance, as compared to other variants. Adding a baseline correction term indeed improves the training performance, validating the effectiveness of baseline in terms of training performance, possibly because of reduced variance. \ac{PPO} achieves the highest training reward, validating the effectiveness of importance sampling and clipping for diffusion fine-tuning. 

\begin{figure}[H]
\centering
\renewcommand{\arraystretch}{0.9}
\setlength{\tabcolsep}{0.06cm}
\begin{tabular}{c c c c c}
    & \small \hspace{0.5cm} Color
    & \small \hspace{0.5cm} Shape
    & \small \hspace{0.5cm} Texture
    & \small \hspace{0.5cm} Aesthetic \\
    
    \rotatebox[origin=lt]{90}{\hspace{0.7cm}\small Reward}
    & \includegraphics[width=0.24\textwidth]{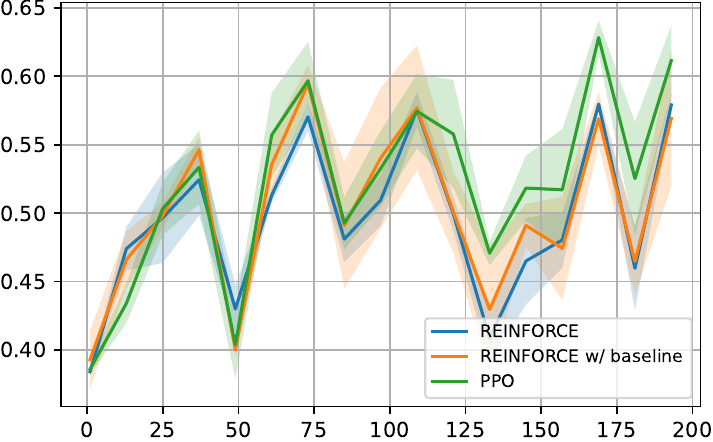}
    & \includegraphics[width=0.24\textwidth]{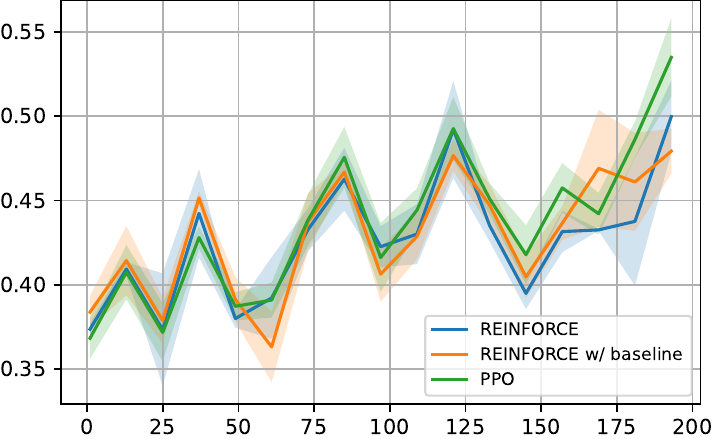}
    & \includegraphics[width=0.24\textwidth]{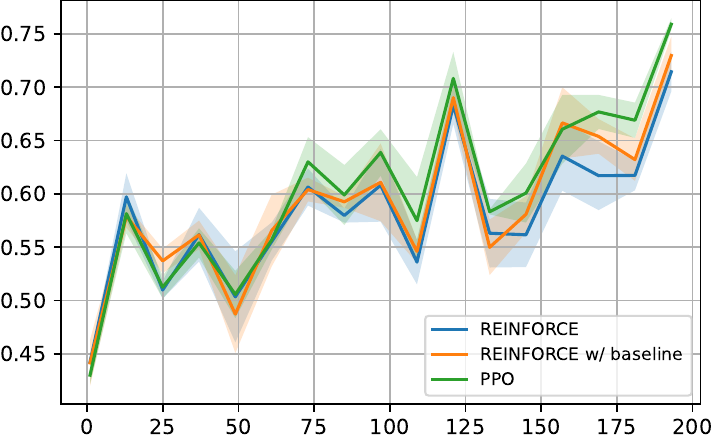}
    & \includegraphics[width=0.24\textwidth]{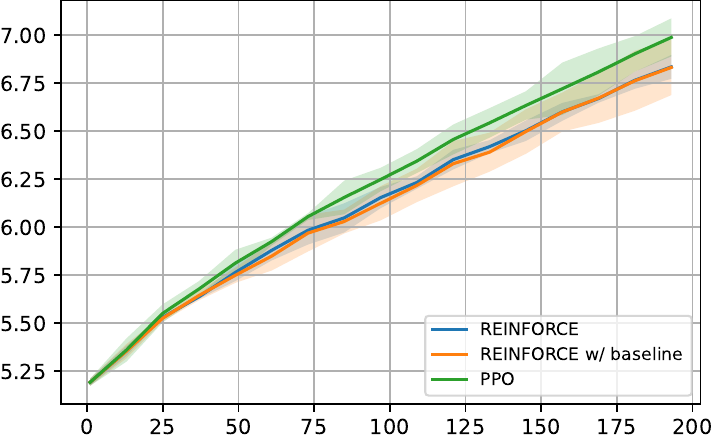} \\
    
    & \small \hspace{1.75em} Epochs
    & \small \hspace{1.75em} Epochs
    & \small \hspace{1.75em} Epochs
    & \small \hspace{1.75em} Epochs \\[2mm]
    
\end{tabular}
\caption{
Evaluating REINFORCE vs.\ PPO trade-off by comparing: REINFORCE (Eq.~\ref{eq:cb_pg}), REINFORCE with baseline correction term (Eq.~\ref{eq:cb_pg_b}), and PPO (Eq.~\ref{eq:ppo_obj}). We evaluate on the T2I-CompBench benchmark over three image attributes: Color, Shape, and Texture. We also compare on the aesthetic task. Y-axis corresponds to the training reward, x-axis corresponds to the training epoch. Results are averaged over three independent runs; shaded regions indicate ±1 standard deviation around the mean, computed over three independent runs.}
\label{fig:revisit_rl_results}
\end{figure}

We also evaluate the performance on a separate validation set. For each validation prompt, we generate 10 independent images from the diffusion policy, and average the reward, finally averaging over all evaluation prompts. The validation results are reported in Table~\ref{tab:rl_basics}. The results are consistent with the pattern observed with the training rewards, i.e., REINFORCE with baseline provides a better performance than plain REINFORCE, suggesting that baseline correction indeed helps with the final performance.
Nevertheless, \ac{PPO} (DDPO) still performs better than REINFORCE.

\begin{table}[th!]
\centering
\caption{Comparing REINFORCE with DDPO on the T2I-CompBench benchmark over three image attributes: Color, Shape, and Texture. We report average
reward on unseen test set (higher is better). For each prompt, average rewards over 10 independent generated images are calculated. We report the mean and standard deviation across 3 independent runs.}
\label{tab:rl_basics}

\setlength{\tabcolsep}{1.1mm}

\begin{tabular}{l
  S[table-format=1.4]@{\,}l
  S[table-format=1.4]@{\,}l
  S[table-format=1.4]@{\,}l}
\toprule
\textbf{Method} &
\multicolumn{2}{c}{\textbf{Color} $\uparrow$} &
\multicolumn{2}{c}{\textbf{Shape} $\uparrow$} &
\multicolumn{2}{c}{\textbf{Texture} $\uparrow$} \\
\midrule

REINFORCE
& 0.6438 & (0.0132)
& 0.5330 & (0.0105)
& 0.6359 & (0.0094) \\

REINFORCE w/ BC
& 0.6351 & (0.0344)
& 0.5347 & (0.0097)
& 0.6656 & (0.0134) \\

DDPO
& \textbf{0.6821} & \textbf{(0.0030)}
& \textbf{0.5655} & \textbf{(0.0185)}
& \textbf{0.6909} & \textbf{(0.0138)} \\

\bottomrule
\end{tabular}
\end{table}

We now have empirical evidence supporting the \textit{efficiency-effectiveness trade-off} discussed in Section~\ref{sect:revist_rl}.
From these results, we can conclude that fine-tuning text-to-image diffusion models is more effective with \ac{IS} and clipping from \ac{PPO}, or baseline corrections from REINFORCE.
This bolsters our motivation for proposing \ac{LOOP} as an approach to effectively combine these methods.

\subsection{Evaluating LOOP}
Next we discuss the results from our proposed \ac{RL} for diffusion fine-tuning method, \ac{LOOP}.

\textbf{Performance during training.} Figure~\ref{fig:main_results} shows the training reward curves for different tasks, against number of epochs. \ac{LOOP} outperforms DDPO \citep{black2023training} across all seven tasks consistently throughout training. This establishes the effectiveness of sampling multiple diffusion trajectories per input prompt, and the leave-one-out baseline correction term (Eq.~\ref{eq:rloo}) during training. Training reward curve is smoother for the aesthetic task, as compared to tasks from the T2I-CompBench benchmark. We hypothesise that improving the attribute binding property of diffusion model is a harder task than improving the aesthetic quality of generated images.

\begin{figure*}[h!]
    \centering
    \setlength{\tabcolsep}{4pt} %
        \begin{tabular}{c r r r}  %
            &
            \multicolumn{1}{c}{\hspace{0.5cm} Color} &
            \multicolumn{1}{c}{\hspace{0.5cm} Shape} &
            \multicolumn{1}{c}{\hspace{0.5cm} Texture} 
            \\[2pt]
            \rotatebox[origin=lt]{90}{\hspace{0.8cm}\small Reward} &
            \includegraphics[scale=0.4]{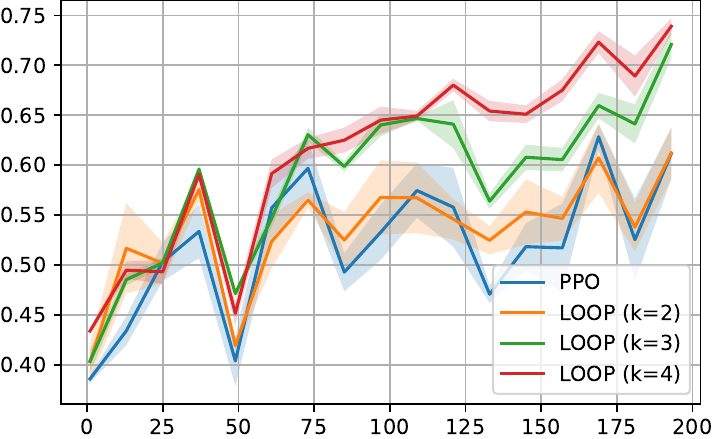} &
            \includegraphics[scale=0.4]{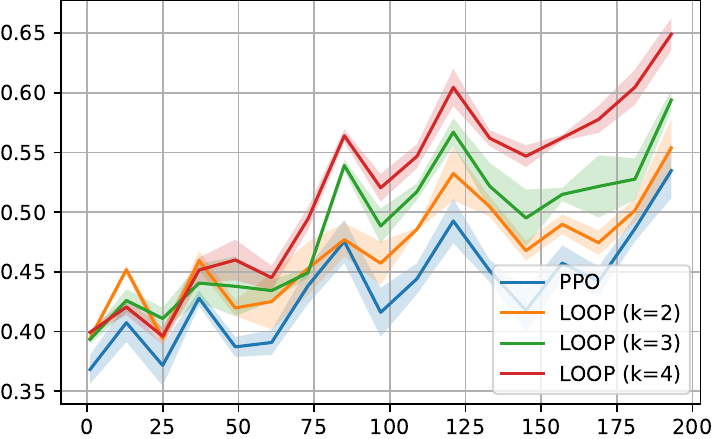} &
            \includegraphics[scale=0.4]{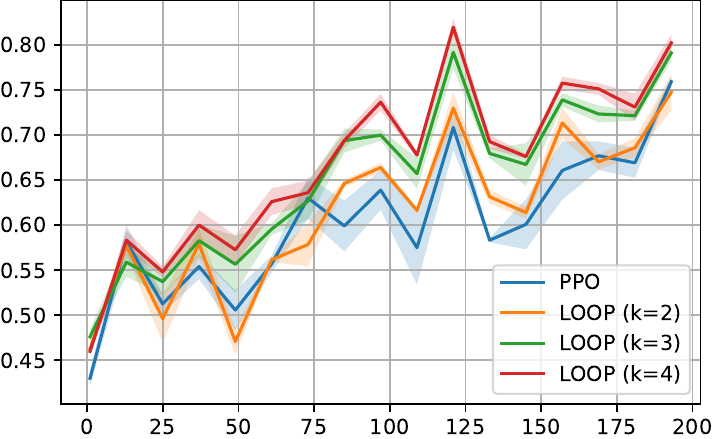}
            \\[6pt]
            &
            \multicolumn{1}{c}{\hspace{0.5cm} Spatial} &
            \multicolumn{1}{c}{\hspace{0.5cm} Image-text alignment} &
            \multicolumn{1}{c}{\hspace{0.5cm} Aesthetic} 
            \\[2pt]
            \rotatebox[origin=lt]{90}{\hspace{0.8cm}\small Reward} &
            \includegraphics[scale=0.4]{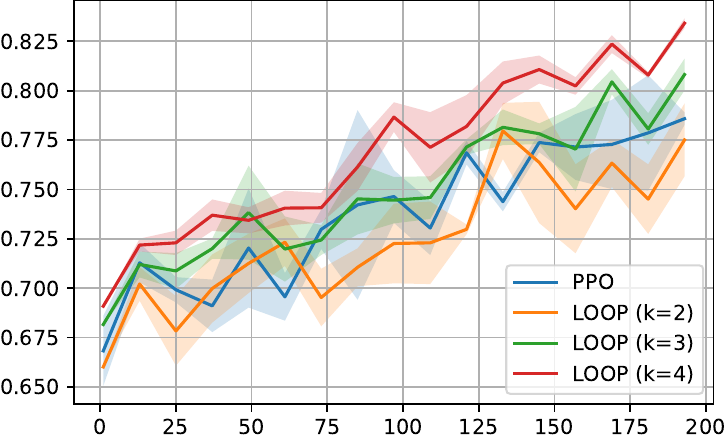} &
            \includegraphics[scale=0.4]{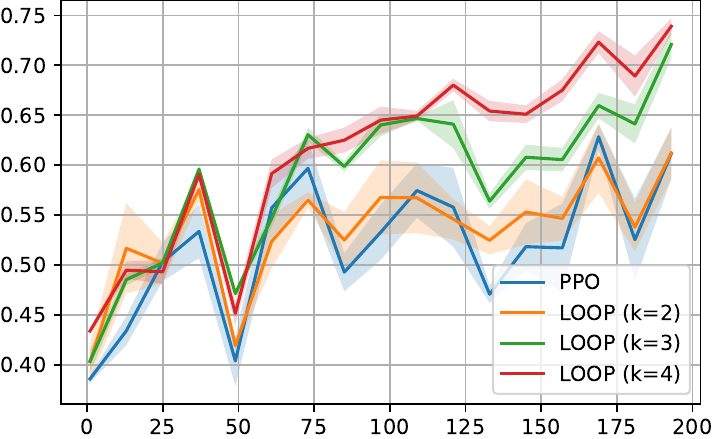} &
            \includegraphics[scale=0.4]{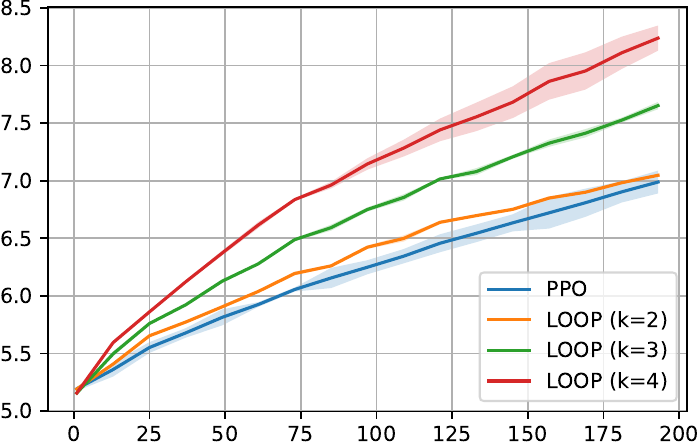}
        \end{tabular}
     \vspace*{0.5\baselineskip}
     
    \caption{%
        Comparing DDPO (referenced as PPO) with the proposed \ac{LOOP} on the T2I-CompBench benchmark with respect to image attributes: \textbf{Color}, \textbf{Shape}, \textbf{Texture}, and \textbf{Spatial relationship}. We also report results on aesthetic preference and image–text alignment tasks~\citep{black2023training}. The y-axis shows training reward, and the x-axis shows training epoch. Results are averaged over three independent runs; shaded regions indicate ±1 standard deviation around the mean, computed over three independent runs.}
    \label{fig:main_results}
\end{figure*}

\begin{table*}[h!]
\centering
\caption{Comparing the performance of the proposed LOOP method with state-of-the-art baselines on the T2I-CompBench benchmark over image attributes such as Color, Shape, Texture, Spatial relation, and Numeracy. The metrics reported are the average reward on an unseen test set (higher is better). For each prompt, rewards are averaged across 10 generated images. For DDPO and LOOP, we additionally report the mean and standard deviation across 3 independent runs.}
\label{tab:main_results}
\setlength{\tabcolsep}{1mm}
\resizebox{\textwidth}{!}{
\begin{tabular}{l@{~}S[table-format=1.4]@{\,}l
                 S[table-format=1.4]@{\,}l
                 S[table-format=1.4]@{\,}l
                 S[table-format=1.4]@{\,}l
                 S[table-format=1.4]@{\,}l}
\toprule
\textbf{Model} & \multicolumn{2}{c}{\textbf{Color} $\uparrow$} & \multicolumn{2}{c}{\textbf{Shape} $\uparrow$} & \multicolumn{2}{c}{\textbf{Texture} $\uparrow$} & \multicolumn{2}{c}{\textbf{Spatial} $\uparrow$} & \multicolumn{2}{c}{\textbf{Numeracy} $\uparrow$} \\
\midrule
Stable v1.4 \citep{rombach2022high} & 0.3765 & & 0.3576 & & 0.4156 & & 0.1246 & & 0.4461 & \\
Stable v2 \citep{rombach2022high} & 0.5065 & & 0.4221 & & 0.4922 & & 0.1342 & & 0.4579 & \\
Composable v2 \citep{liu2022compositional} & 0.4063 & & 0.3299 & & 0.3645 & & 0.0800 & & 0.4261 & \\
Structured v2 \citep{feng2022training} & 0.4990 & & 0.4218 & & 0.4900 & & 0.1386 & & 0.4550 & \\
Attn-Exct v2 \citep{chefer2023attend} & 0.6400 & & 0.4517 & & 0.5963 & & 0.1455 & & 0.4767 & \\
\midrule
GORS unbiased \citep{huang2023t2i} & 0.6414 & & 0.4546 & & 0.6025 & & 0.1725 & & \multicolumn{2}{c}{--} \\
GORS \citep{huang2023t2i} & 0.6603 & & 0.4785 & & 0.6287 & & 0.1815 & & 0.4841 & \\
\midrule
DDPO \citep{black2023training} & 0.6821 & (0.0030) & 0.5656 & (0.0185) & 0.6909 & (0.0138) & 0.1961 & (0.0034) & 0.5102 & (0.0041) \\
LOOP ($k=2$) & 0.6786 & (0.0037) & 0.5746 & (0.0125) & 0.6938 & (0.0047) & 0.1801 & (0.0085) & 0.5072 & (0.0052) \\
LOOP ($k=3$) & 0.7516 & (0.0097) & 0.6220 & (0.0173) & 0.7354 & (0.0071) & 0.1966 & (0.0058) & 0.5242 & (0.0062) \\
LOOP ($k=4$) & {\textbf{0.7859}} & \textbf{(0.0114)} & {\textbf{0.6676}} & \textbf{(0.0021)} & {\textbf{0.7519}} & \textbf{(0.0036)} & {\textbf{0.2137}} & \textbf{(0.0073)} & {\textbf{0.5423}} & \textbf{(0.0014)} \\
\bottomrule
\end{tabular}
}
\end{table*}

Table~\ref{tab:main_results} reports the average rewards on the test set across various tasks from the T2I-CompBench benchmark. For each prompt, we generate 10 different images and calculate the average rewards. \ac{LOOP} consistently outperforms DDPO \citep{black2023training} and other strong supervised learning-based baselines across all tasks. Notably, LOOP achieves relative improvements of $\textbf{18.1\%}$ and $\textbf{15.2\%}$ over DDPO on shape and color attributes, respectively.

For the aesthetic and image-text alignment objectives, the validation rewards are reported in Table~\ref{tab:method_attributes}. \ac{LOOP} results in a $\textbf{15.4\%}$ relative improvement over PPO for the aesthetic task, and a $\textbf{2.4\%}$ improvement over \ac{PPO} for the image-text alignment task. 

\begin{table}[!h]
\centering
\caption{Comparing the performance of LOOP with DDPO on the aesthetic and image-text alignment tasks. Higher values are better. For each prompt, rewards are averaged across 10 generated images. We report the mean and standard deviation across 3 independent runs.}
\label{tab:method_attributes}

\setlength{\tabcolsep}{1.5mm}

\begin{tabular}{l
  S[table-format=1.4]@{\,}l
  S[table-format=2.3]@{\,}l}
\toprule
\textbf{Method} &
\multicolumn{2}{c}{\textbf{Aesthetic} $\uparrow$} &
\multicolumn{2}{c}{\textbf{Image Align.} $\uparrow$} \\
\midrule

DDPO \citep{black2023training}
& 6.6795 & (0.0925)
& 20.456 & (0.4383) \\

LOOP ($k=2$)
& 6.7338 & (0.0792)
& 20.773 & (0.4383) \\

LOOP ($k=3$)
& 7.1213 & (0.0370)
& 20.623 & (0.0963) \\

LOOP ($k=4$)
& \textbf{7.7061} & \textbf{(0.1006)}
& \textbf{20.912} & \textbf{(0.0787)} \\

\bottomrule
\end{tabular}
\end{table}

\textbf{Impact of number of independent trajectories ($k$).} The LOOP variant with number of independent trajectories $K=4$ performs the best across all tasks, followed by the variant $K=3$. This is intuitive given that Monte-Carlo estimates get better with more number of samples~\citep{mcbook}. Surprisingly, the performance of the variant with $K=2$ is comparable to \ac{PPO}.

\section{Qualitative Examples}
\label{section:qualitative-examples}
For a qualitative evaluation of the attribute-binding reasoning ability, we present some  example image generations from SD, DDPO, and \ac{LOOP} in Figures~\ref{fig:improve-attribute-binding}, \ref{tab:images1}, and~\ref{tab:images2}. 

In Figure~\ref{fig:improve-attribute-binding} qualitative examples of the attribute binding task are presented. In the example in the first column of Figure~\ref{fig:improve-attribute-binding}, the input prompt specifies a black ball with a white cat. Stable diffusion (SD) and \ac{PPO} fail to bind the color black with the generated ball, whereas \ac{LOOP} successfully binds that attribute. Similarly, in the third column, SD and \ac{PPO} fail to bind the hexagon shape attribute to the watermelon, whereas \ac{LOOP} manages to do that. In the fourth column, SD and \ac{PPO} fail to add the  horse object itself, whereas \ac{LOOP} adds the horse with the specified black color, and flowing cyan patterns.

Figure~\ref{tab:images1} highlights improvements in aesthetic quality of the generated images. Compared to SD v2 and PPO, LOOP produces sharper, more coherent compositions with balanced lighting and color tone. For example, in the second column (“a cat”) and in the fourth column (“butterfly”), LOOP enhances realism and contrast while preserving overall artistic intent.

Finally, Figure~\ref{tab:images2} presents additional qualitative examples that emphasize both binding and aesthetics. LOOP accurately binds challenging color-object pairs (e.g., teal branch, pink cornfield) while producing more visually appealing and natural results. PPO and SD v2 often miss attribute alignment or produce dull, less cohesive scenes.

\begin{figure*}[!t]
\centering
\setlength{\tabcolsep}{1mm}
\begin{tabular}{rccccc}
    \raisebox{1.25cm}{SD v2} & 
    \includegraphics[width=2.75cm]{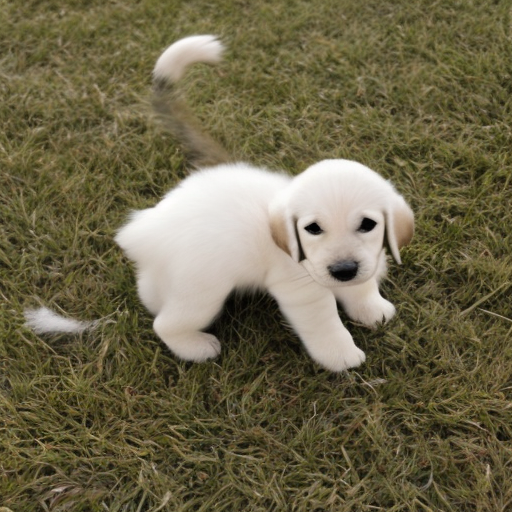} 
      & \includegraphics[width=2.75cm]{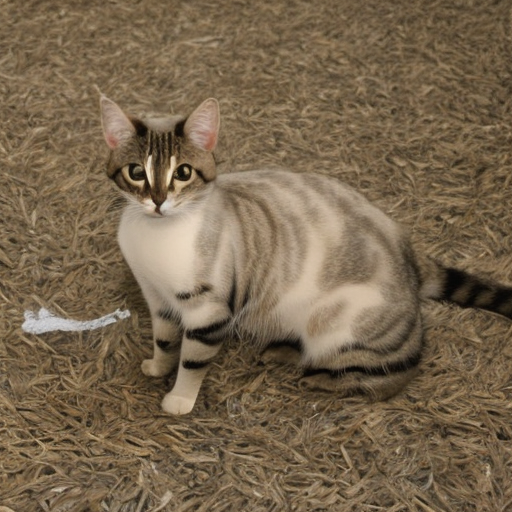} 
      & \includegraphics[width=2.75cm]{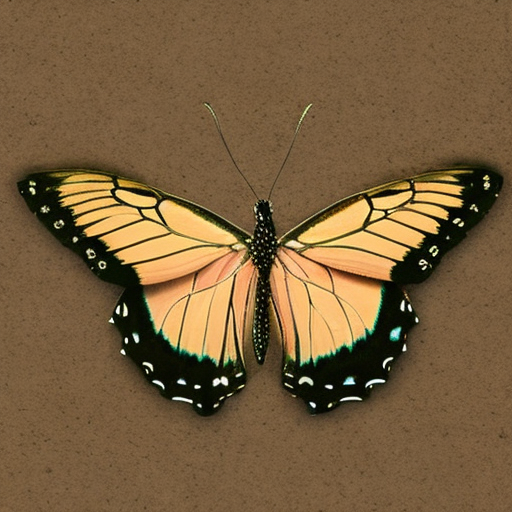} 
      & \includegraphics[width=2.75cm]{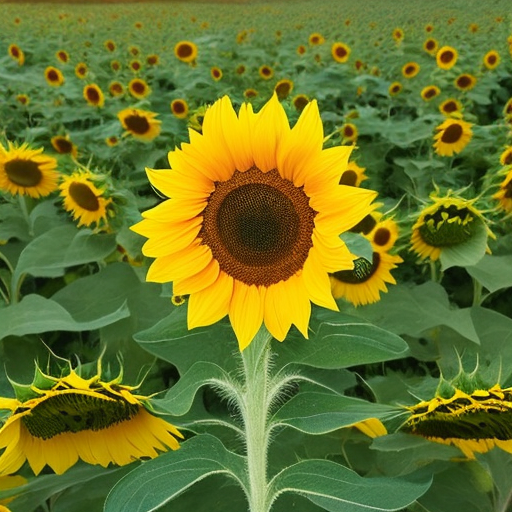} 
      & \includegraphics[width=2.75cm]{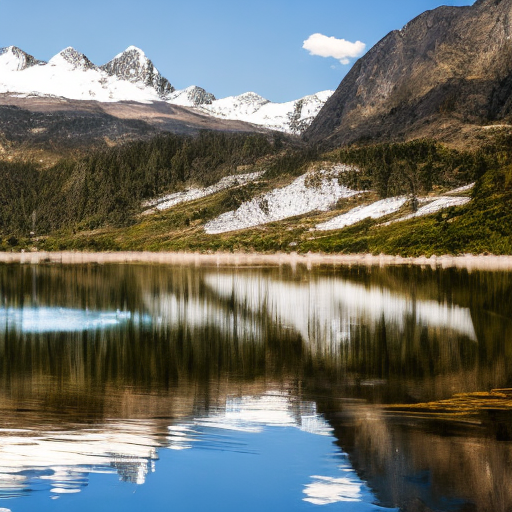} 
\\ 
    \raisebox{1.25cm}{PPO} & 
    \includegraphics[width=2.75cm]{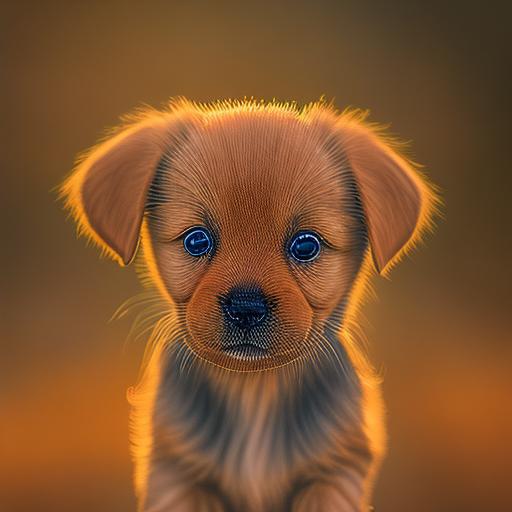} 
      & \includegraphics[width=2.75cm]{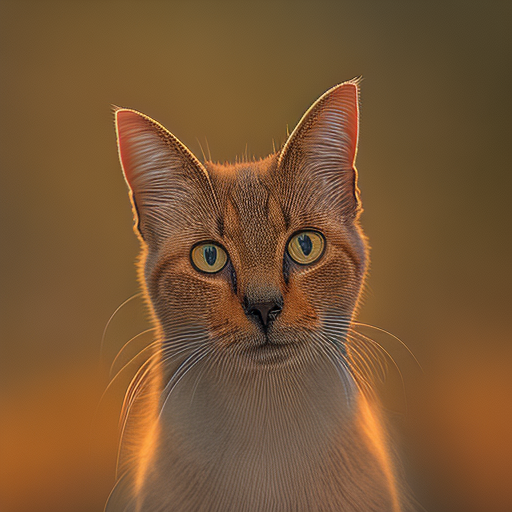} 
      & \includegraphics[width=2.75cm]{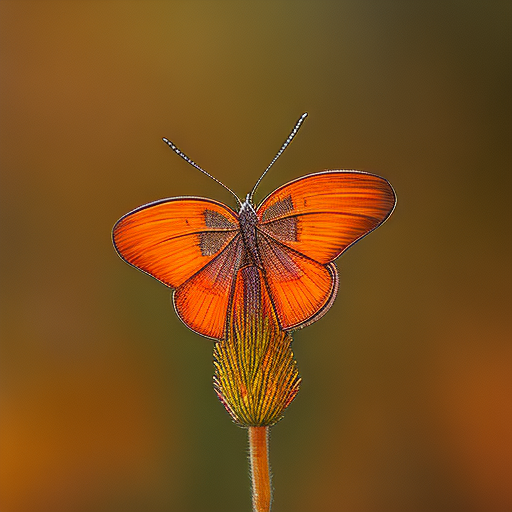} 
      & \includegraphics[width=2.75cm]{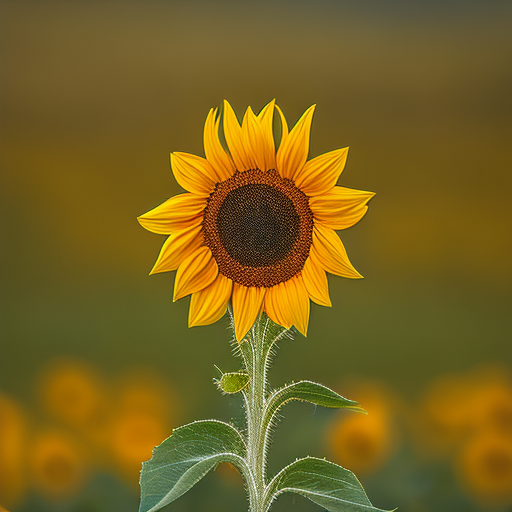} 
      & \includegraphics[width=2.75cm]{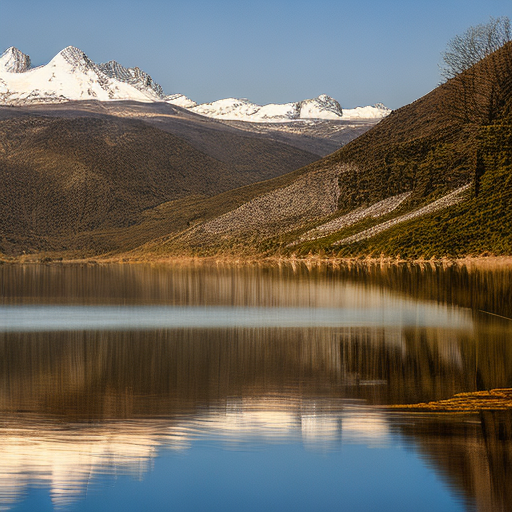} 
\\ 
    \raisebox{1.25cm}{LOOP} & \includegraphics[width=2.75cm]{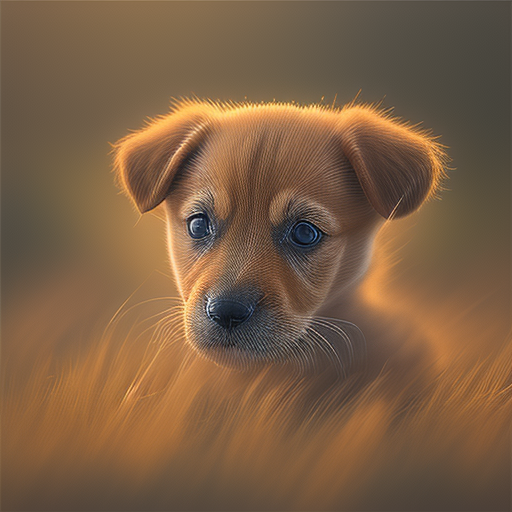} 
      & \includegraphics[width=2.75cm]{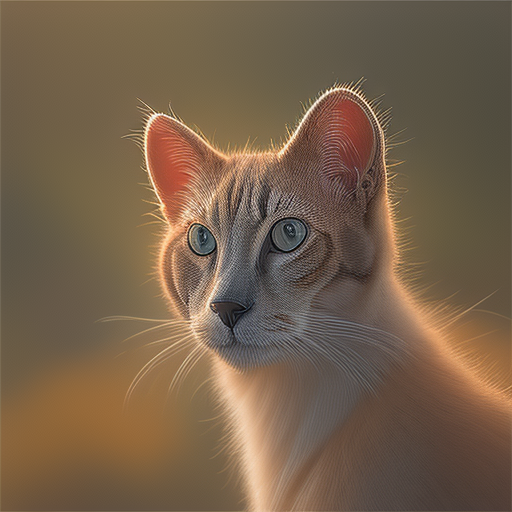} 
      & \includegraphics[width=2.75cm]{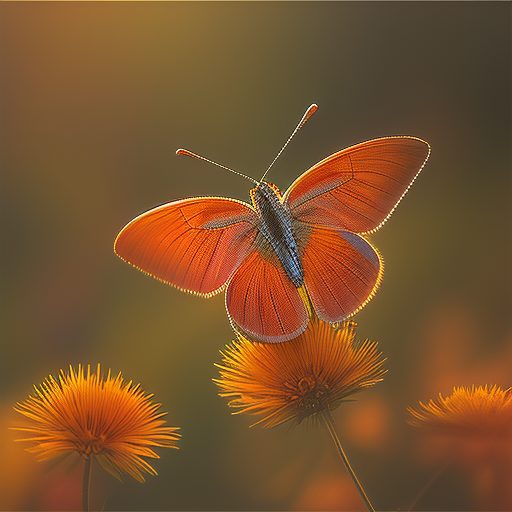} 
      & \includegraphics[width=2.75cm]{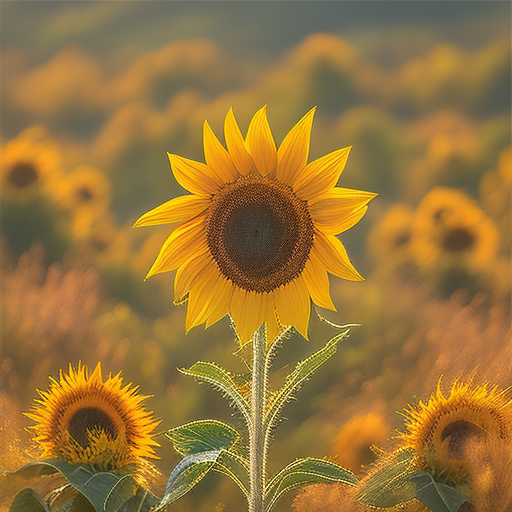} 
      & \includegraphics[width=2.75cm]{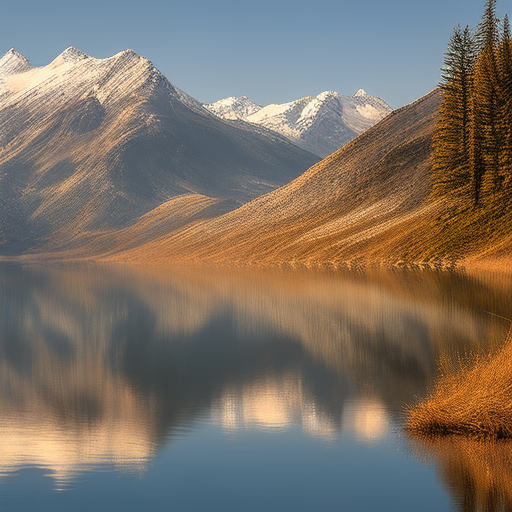} 
      \\ 
      & \parbox[t]{2.75cm}{\small ``A puppy dog''} & 
      \parbox[t]{2.75cm}{\small ``A cat'' \\ \\ } & 
      \parbox[t]{2.75cm}{\small ``Butterfly''\\ \\ } & 
      \parbox[t]{2.75cm}{\raggedright\small ``Bright yellow sunflower in a green field'' \\} & 
      \parbox[t]{2.75cm}{\raggedright\small ``Crystal clear mountain lake reflecting snow-capped peaks''}
\end{tabular}
\caption{\textbf{LOOP improves aesthetic quality}. Qualitative examples are presented from images generated via: Stable Diffusion 2.0 (first row), \ac{PPO} (second row), and \ac{LOOP} $k=4$ (third row). \ac{LOOP} consistently generates more aesthetic images, as compared to \ac{PPO} and SD.}
\label{tab:images1}
\end{figure*}

\begin{figure*}[!h]
\centering
\setlength{\tabcolsep}{1mm}
\begin{tabular}{rccccc}
    \raisebox{1.25cm}{SD v2} & 
    \includegraphics[width=2.75cm]{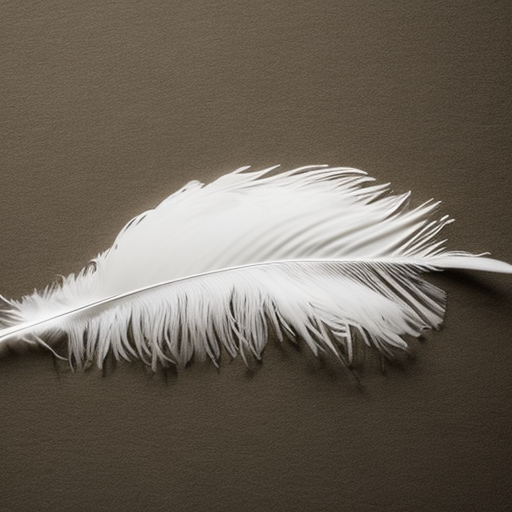} 
      & \includegraphics[width=2.75cm]{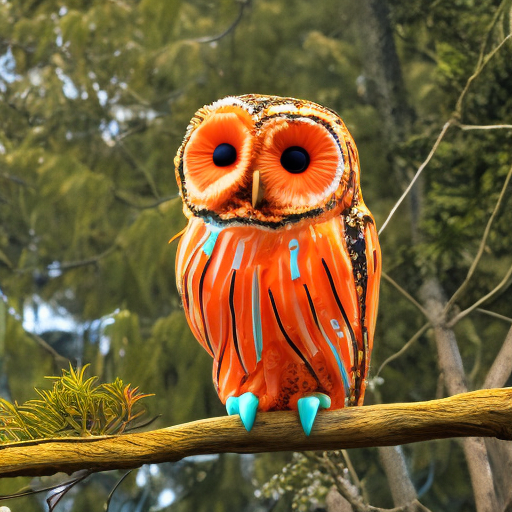} 
      & \includegraphics[width=2.75cm]{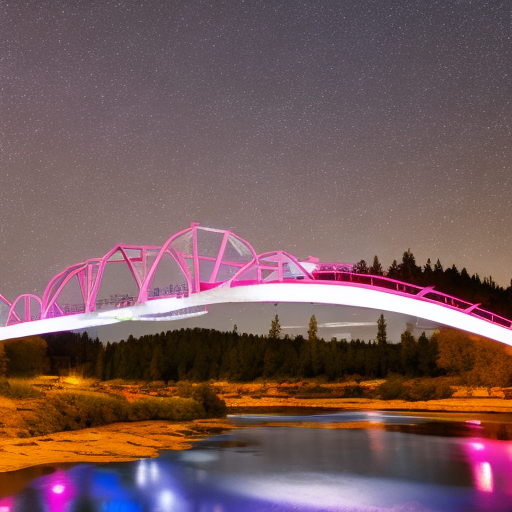} 
      & \includegraphics[width=2.75cm]{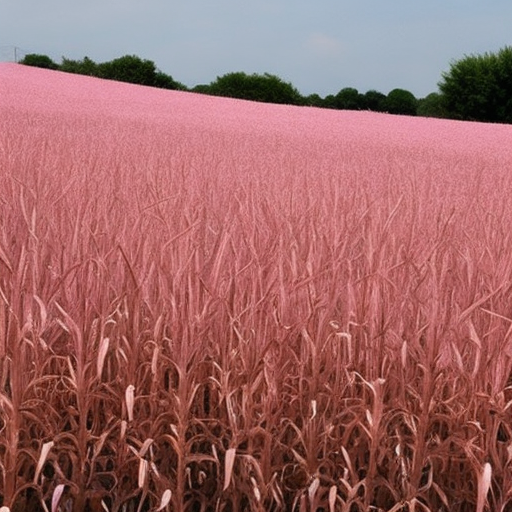} 
      & \includegraphics[width=2.75cm]{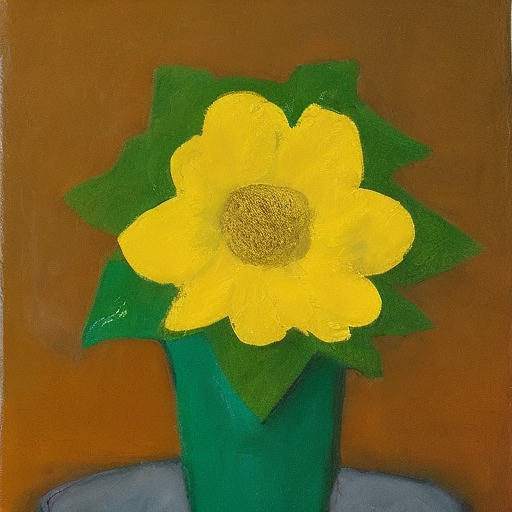} 
\\ 
    \raisebox{1.25cm}{PPO} & 
    \includegraphics[width=2.75cm]{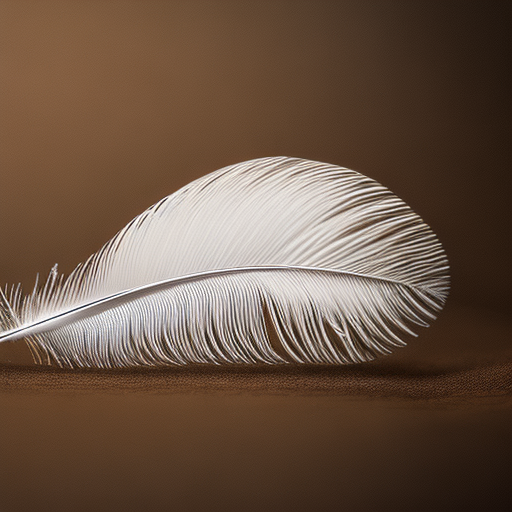} 
      & \includegraphics[width=2.75cm]{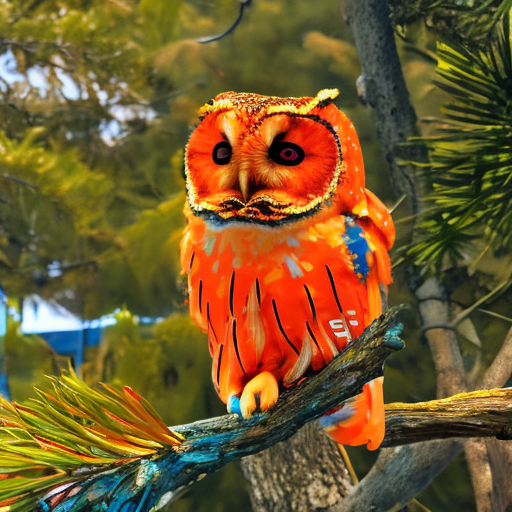} 
      & \includegraphics[width=2.75cm]{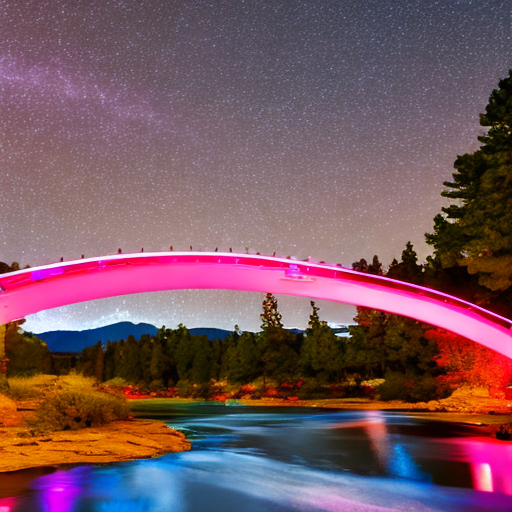} 
      & \includegraphics[width=2.75cm]{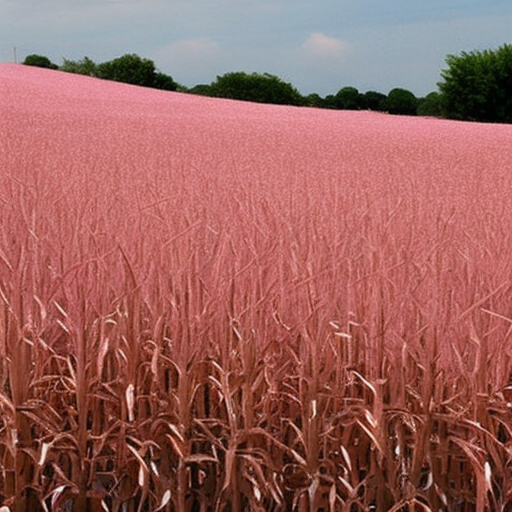} 
      & \includegraphics[width=2.75cm]{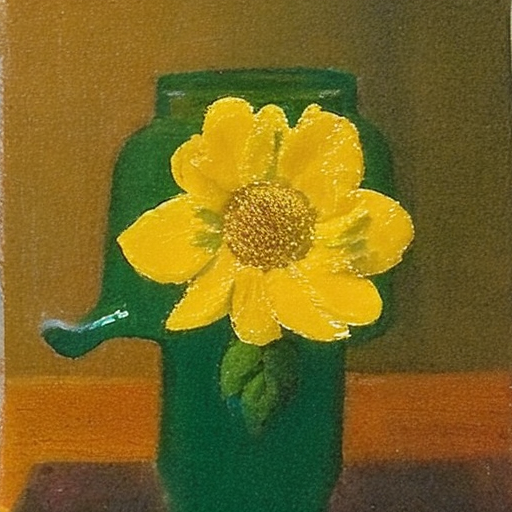} 
\\ 
    \raisebox{1.25cm}{LOOP} & \includegraphics[width=2.75cm]{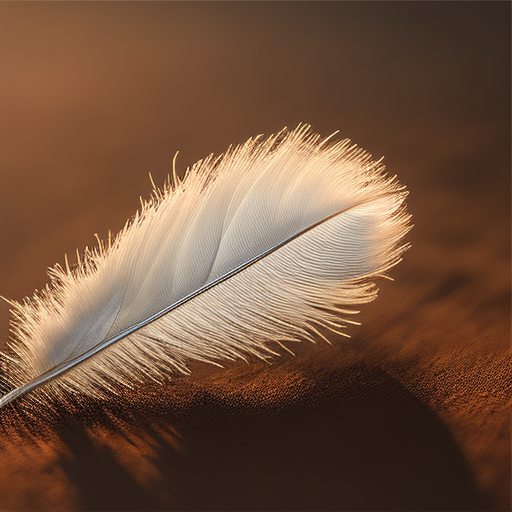} 
      & \includegraphics[width=2.75cm]{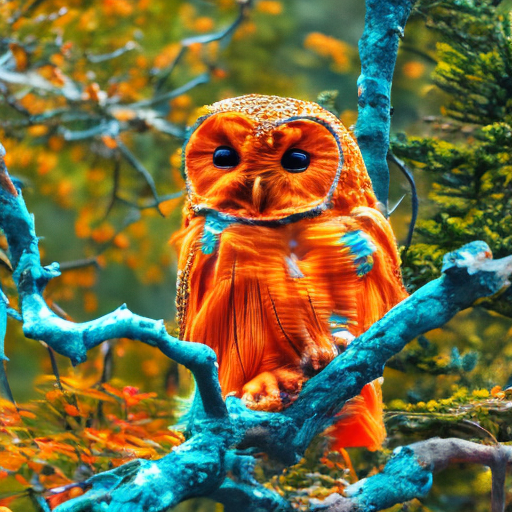} 
      & \includegraphics[width=2.75cm]{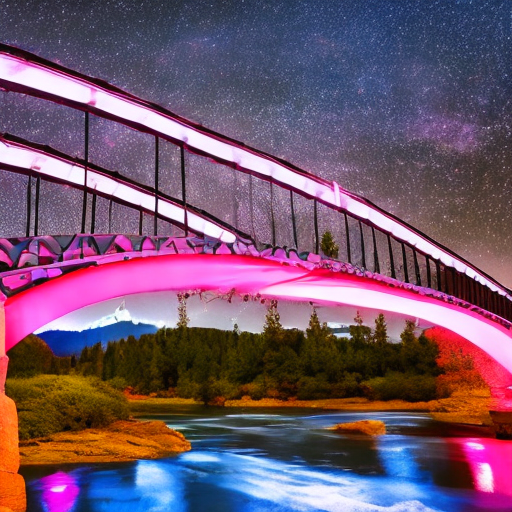} 
      & \includegraphics[width=2.75cm]{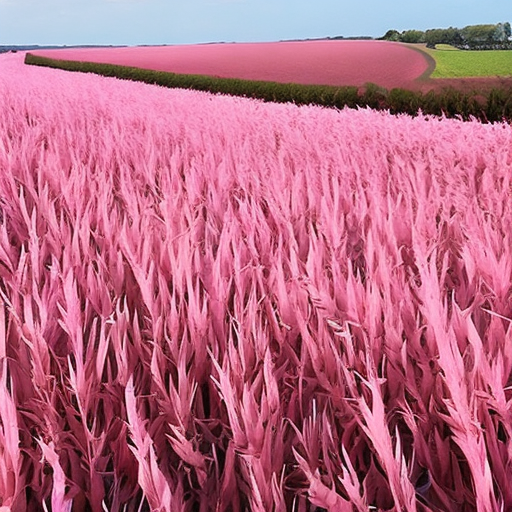} 
      & \includegraphics[width=2.75cm]{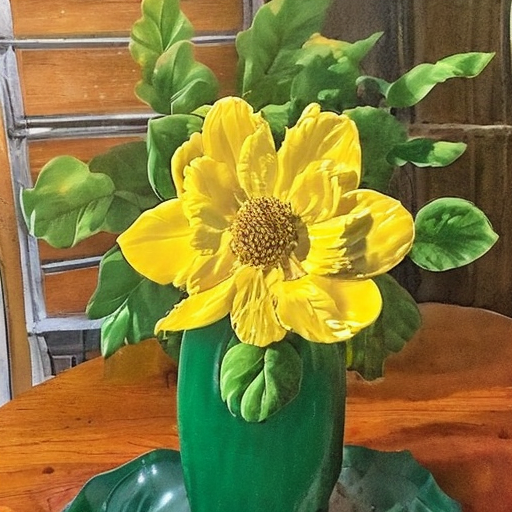} \\ 
      & \parbox[t]{2.75cm}{\raggedright\small ``A white feather on a black velvet surface''} & 
      \parbox[t]{2.75cm}{\raggedright\small ``A neon orange owl sitting on a teal branch''} & 
      \parbox[t]{2.75cm}{\raggedright\small ``Pink bridge over a glowing blue river''} & 
      \parbox[t]{2.75cm}{\raggedright\small ``A pink cornfield''} & 
      \parbox[t]{2.75cm}{\raggedright\small ``A yellow flower in a green vase''}
\end{tabular}
\vspace{7pt}
\caption{Additional qualitative examples presented from images generated via Stable Diffusion 2.0 (first row), \ac{PPO} (second row), and \ac{LOOP} $k=4$ (third row). \ac{LOOP} consistently generates more aesthetic images, as compared to \ac{PPO} and SD (first, third, and fifth prompt). \ac{LOOP} also binds the color attribute (teal branch in second example, and pink cornfield in the forth example), where SD and \ac{PPO} fail.}
\label{tab:images2}
\end{figure*}

\section{Conclusion}
\label{sect:conclusion}

We have studied the efficiency-effectiveness trade-off between two fundamental RL methods for diffusion fine-tuning: REINFORCE and \ac{PPO}. Our analysis, both theoretical and empirical, demonstrates that while REINFORCE is simpler to implement, requiring less memory and fewer hyperparameters, it suffers from high variance and poor sample efficiency compared to PPO. PPO, though more sample efficient and more effective in terms of reward optimization, comes with significant implementation overhead, requiring three models in memory simultaneously and sensitive hyperparameter tuning.

Building on these insights, we have introduced \ac{LOOP}, a novel RL method for diffusion fine-tuning that combines variance reduction techniques from REINFORCE (multiple trajectory sampling and leave-one-out baseline correction) with the robustness and sample efficiency of PPO (importance sampling and clipping). Our empirical evaluation on the T2I-CompBench benchmark demonstrates that \ac{LOOP} achieves substantial improvements over both the base Stable Diffusion model and the state-of-the-art \ac{PPO} method across multiple tasks, including attribute binding (color, shape, texture, spatial relationships), aesthetic quality, and image-text alignment.

Quantitatively, \ac{LOOP} ($k=4$) achieves substantial improvements over \ac{PPO} across all evaluated tasks. 
On the T2I-CompBench benchmark, \ac{LOOP} achieves relative improvements of $\mathbf{18.1}\%$ on shape binding, $\mathbf{15.2}\%$ on color binding, $\mathbf{8.8}\%$ on texture binding, and $\mathbf{8.9}\%$ on spatial reasoning. LOOP also improves aesthetic quality by $\mathbf{15.4}\%$ and image-text alignment by $\mathbf{2.2}\%$.
Qualitatively, as shown in Figures~\ref{fig:improve-attribute-binding}, \ref{tab:images1}, and~\ref{tab:images2}, LOOP successfully binds attributes that previous methods fail to capture, while also producing more visually coherent and aesthetic images.

A limitation of LOOP is the increased computational cost from sampling multiple diffusion trajectories per prompt, which leads to longer training times compared to standard \ac{PPO}. Future work could explore adaptive sampling strategies to reduce this overhead while maintaining \ac{LOOP}'s effectiveness, extend the method to other diffusion architectures and modalities, or investigate the integration of human preference modeling for better alignment with real-world objectives.

\subsubsection*{Acknowledgements}
We thank our reviewers for their helpful feedback.
This research was supported by Huawei Finland through DreamsLab, the Dutch Research Council (NWO), under project numbers 024.004.022, NWA.1389.20.183,  KICH3.LTP.20.006, and VI.Veni.222.26, and the European Union under grant agreements No.\ 101070212 (FINDHR) and No.\ 101201510 (UNITE). 
Views and opinions expressed are those of the author(s) only and do not necessarily reflect those of their respective employers, funders and/or granting authorities.

\bibliographystyle{tmlr}
\bibliography{references}

\newpage
\appendix
\section{LOOP Algorithm Pseudo-code}
\label{app:loop-pseudocode}

This section provides a detailed pseudo-code implementation of the LOOP (Leave-One-Out PPO) algorithm for diffusion fine-tuning. The algorithm, shown in Figure~\ref{fig:loop-pseudocode}, consists of four main steps. Step~1: sampling multiple trajectories per prompt from the old policy. Step~2: computing leave-one-out baseline advantages (Eq.~\ref{eq:loop_advantage_eqn}). Step~3: updating the policy using clipped LOOP objective with importance sampling (Eq.~\ref{eq:loop_estimator}). And Step~4: performing gradient descent.

\begin{figure}[h]
\begin{tcolorbox}[
    colback=gray!5,
    colframe=blue!75!black,
    title=LOOP: Leave-One-Out PPO for Diffusion Fine-tuning,
    fonttitle=\bfseries,
    boxrule=0.5pt,
    arc=2mm
]
\begin{lstlisting}[language=Python, basicstyle=\ttfamily\footnotesize, breaklines=true, frame=none, numbers=none]
# Input: prompts, old_policy, reward_function
# Hyperparameters: K (trajectories per prompt), epsilon (clipping)

for prompt in training_prompts:
    
    # Step 1: Sample K trajectories from old policy
    trajectories = []
    rewards = []
    for i in range(K):
        x_noise = sample_gaussian_noise()
        traj = diffusion_reverse_process(old_policy, prompt, x_noise)
        trajectories.append(traj)
        rewards.append(reward_function(traj.final_image, prompt))
    
    # Step 2: Compute leave-one-out baseline advantages
    advantages = []
    for i in range(K):
        # Baseline: average of all rewards excluding i-th reward
        baseline_i = 0
        for j in range(K):
            if j != i:
                baseline_i += rewards[j]
        baseline_i = baseline_i / (K - 1)
        
        # Advantage: reward minus leave-one-out baseline
        advantages.append(rewards[i] - baseline_i)
    
    # Step 3: LOOP update with importance sampling and clipping
    loss = 0
    for i in range(K):
        for t in range(num_timesteps):
            # Compute importance ratio
            log_prob_new = current_policy.log_prob(traj[i], t)
            log_prob_old = old_policy.log_prob(traj[i], t)
            ratio = exp(log_prob_new - log_prob_old)
            
            # Clipped surrogate objective
            clipped_ratio = clip(ratio, 1-epsilon, 1+epsilon)
            loss += -min(ratio * advantages[i], 
                         clipped_ratio * advantages[i])
    
    # Step 4: Gradient descent
    optimize(loss)
\end{lstlisting}
\end{tcolorbox}
\caption{Pseudo-code for LOOP. The algorithm samples $K$ trajectories per prompt, computes leave-one-out advantages $A^i = R^i - \frac{1}{K-1}\sum_{j\neq i}R^j$, and updates the policy with clipped importance sampling.}
\label{fig:loop-pseudocode}
\end{figure}

\section{LOOP vs.\ GRPO}
\label{app:loop_vs_grpo}

In this section, we compare LOOP with GRPO~\citep{shao2024deepseekmath}. In the introduction (Section~\ref{sec:intro}), we provided a theoretical comparison of LOOP with GRPO. In this section, we provide an empirical comparison on the aesthetic task. 

Figure~\ref{fig:grpo_vs_loop} presents results from a small-scale experiment comparing GRPO with LOOP on the aesthetic fine-tuning task. We conducted this auxiliary study because the base model used in our main experiments, Stable Diffusion 2.0, has since been deprecated.\footnote{https://huggingface.co/stabilityai/stable-diffusion-2-base} As a result, we ran both methods using an unofficial fork of SD 2.0.\footnote{https://huggingface.co/Manojb/stable-diffusion-2-1-base}

\begin{figure}[h]
    \centering
    \includegraphics[width=0.6\linewidth]{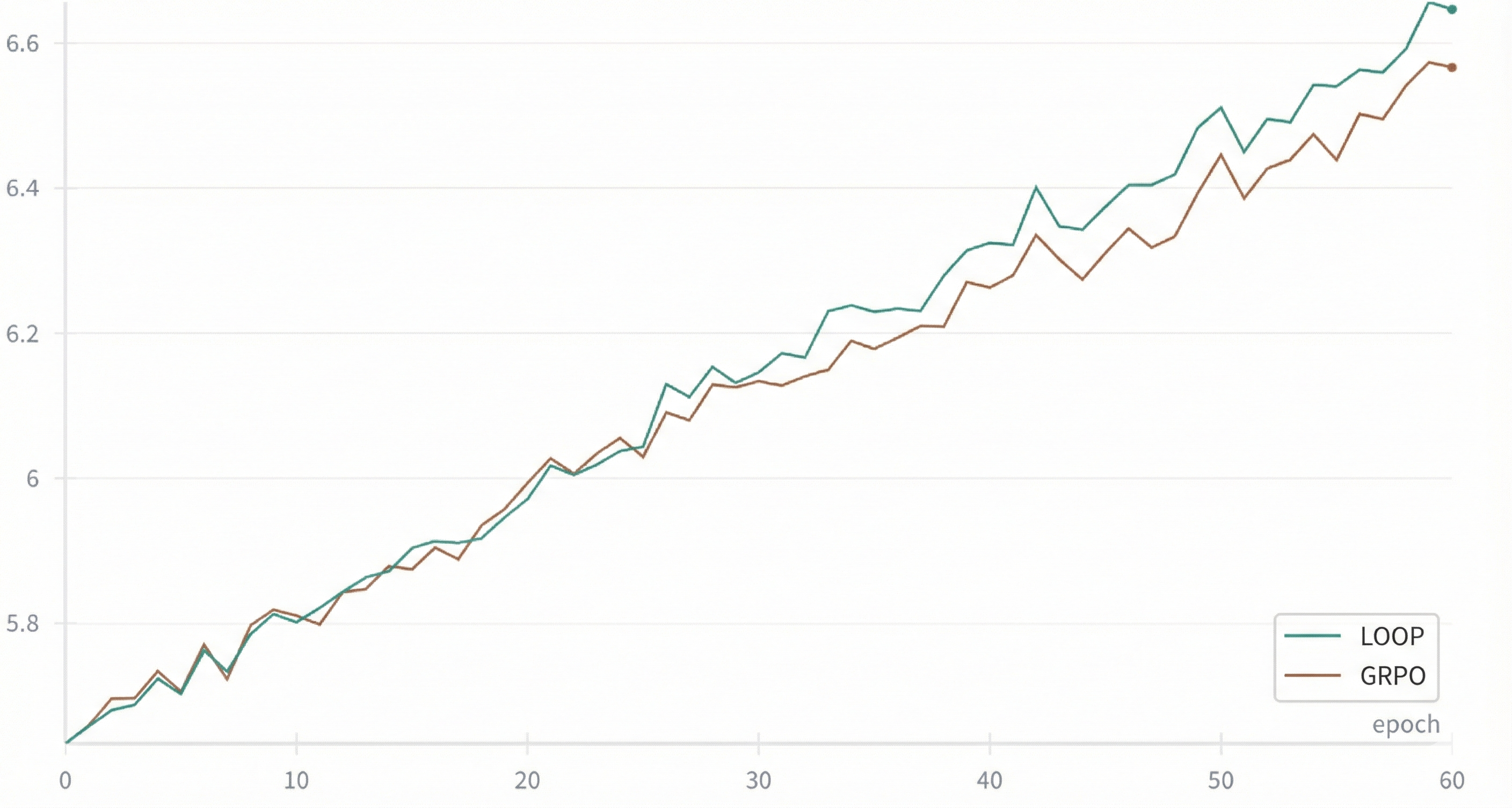}
    \caption{Comparision of LOOP vs GRPO on the aesthetic task.}
    \label{fig:grpo_vs_loop}
\end{figure}

The plot shows steady and stable reward improvement for both methods over training epochs. In the early phase, GRPO and LOOP exhibit closely aligned performance, indicating similar initial optimization behavior. As training progresses, LOOP consistently achieves slightly higher aesthetic rewards than GRPO, maintaining a persistent margin through convergence. Overall, while both methods demonstrate stable learning dynamics, LOOP attains better final aesthetic performance in this small-scale setting.

Overall, the figure empirically supports the claim that incorporating leave-one-out baseline correction within a PPO-style clipped objective improves optimization effectiveness for diffusion fine-tuning, yielding better alignment with aesthetic reward models while maintaining stable training dynamics.

\section{Comparing Different Baseline Correction Terms}
\label{sect:baseline_comp}
Figure~\ref{fig:looprn_vs_loop} presents a small-scale empirical comparison between the leave-one-out baseline used in LOOP and the running-mean baseline adopted in DDPO (PPO)~\citep{black2023training}, evaluated on the aesthetic task in the same toy setup used for the GRPO vs.\ LOOP comparison (Appendix~\ref{app:loop_vs_grpo}). Both variants demonstrate stable and monotonic reward improvement over training, confirming that either baseline strategy supports effective optimization. However, the leave-one-out baseline consistently achieves higher rewards than the running-mean baseline, with the gap becoming more pronounced in later epochs.

A possible explanation for the weaker performance of the running-mean baseline is the policy drift. Since the running mean aggregates rewards across different optimization steps, it mixes samples generated by different policies over time. As the policy evolves, older reward statistics may no longer accurately reflect the current policy distribution, leading to a misaligned baseline and suboptimal credit assignment. In contrast, the leave-one-out baseline is computed using multiple trajectories sampled from the same policy at the same iteration, making it more locally consistent and better aligned with the current optimization step. This likely contributes to the improved and more stable performance observed with the leave-one-out baseline.

\begin{figure}[!tbp]
    \centering
    \includegraphics[width=0.6\linewidth]{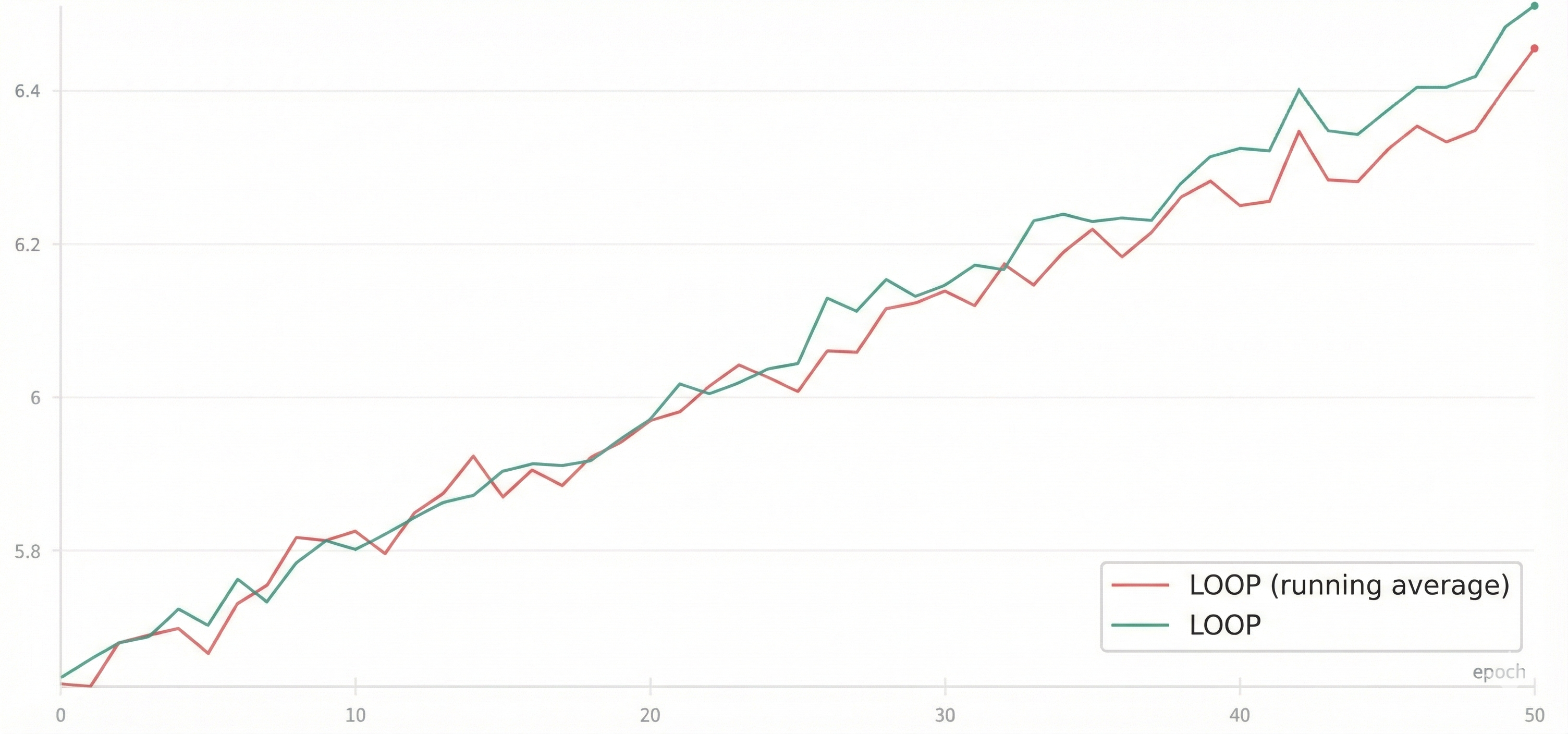}
    \caption{Comparision of LOOP vs GRPO on the aesthetic task.}
    \label{fig:looprn_vs_loop}
\end{figure}

\section{Importance of Clipping in PPO}
\label{sect:clipping_ppo}
Figure~\ref{fig:ppo_clipping_ablation} presents an ablation study evaluating the role of clipping in PPO for diffusion fine-tuning across three T2I-CompBench tasks: Color, Shape, and Texture. We compare standard PPO with clipping against PPO without clipping under the same training configuration. Across all three tasks, PPO with clipping consistently achieves higher rewards and exhibits smoother, more stable training dynamics. While PPO without clipping initially improves, its performance is less stable and converges to lower final reward values. The gap becomes more pronounced in later epochs, indicating that clipping plays an important role in maintaining controlled policy updates and preventing excessive deviation from the reference policy. These results empirically support the theoretical motivation for clipping and demonstrate its practical importance for stable and effective diffusion model fine-tuning.

\begin{figure}[!ht]
\centering
\begin{minipage}[t]{0.47\linewidth}
    \centering
    \includegraphics[width=0.98\linewidth]{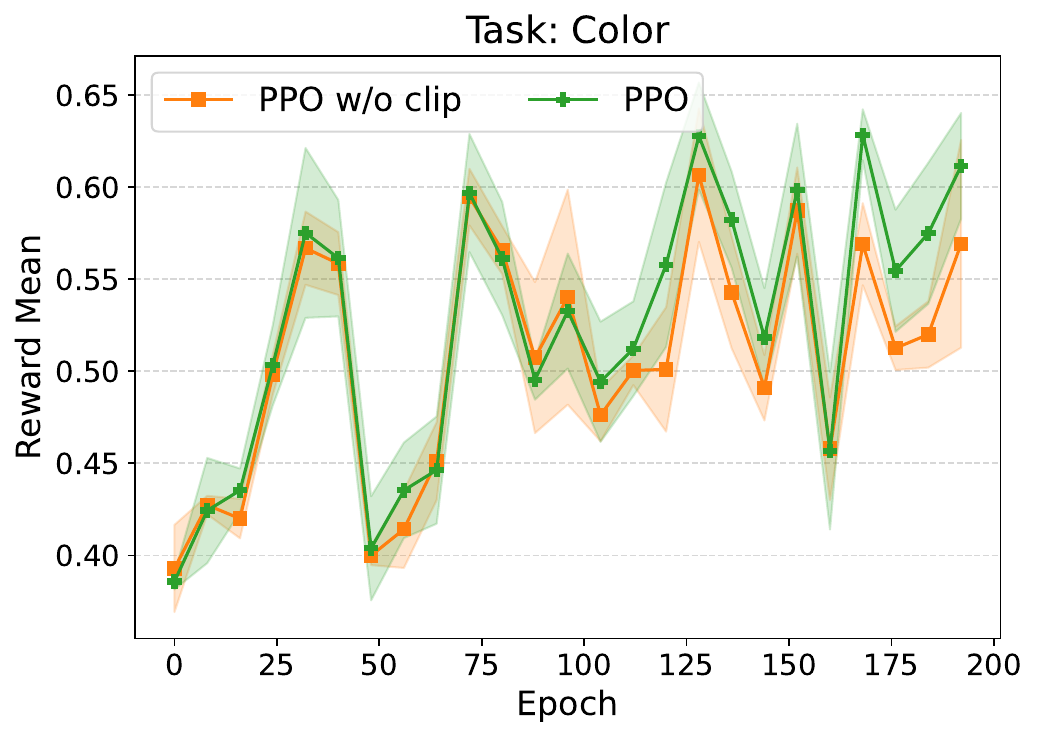}
    \vspace{-0.6em}
\end{minipage}\hfill
\begin{minipage}[t]{0.47\linewidth}
    \centering
    \includegraphics[width=0.98\linewidth]{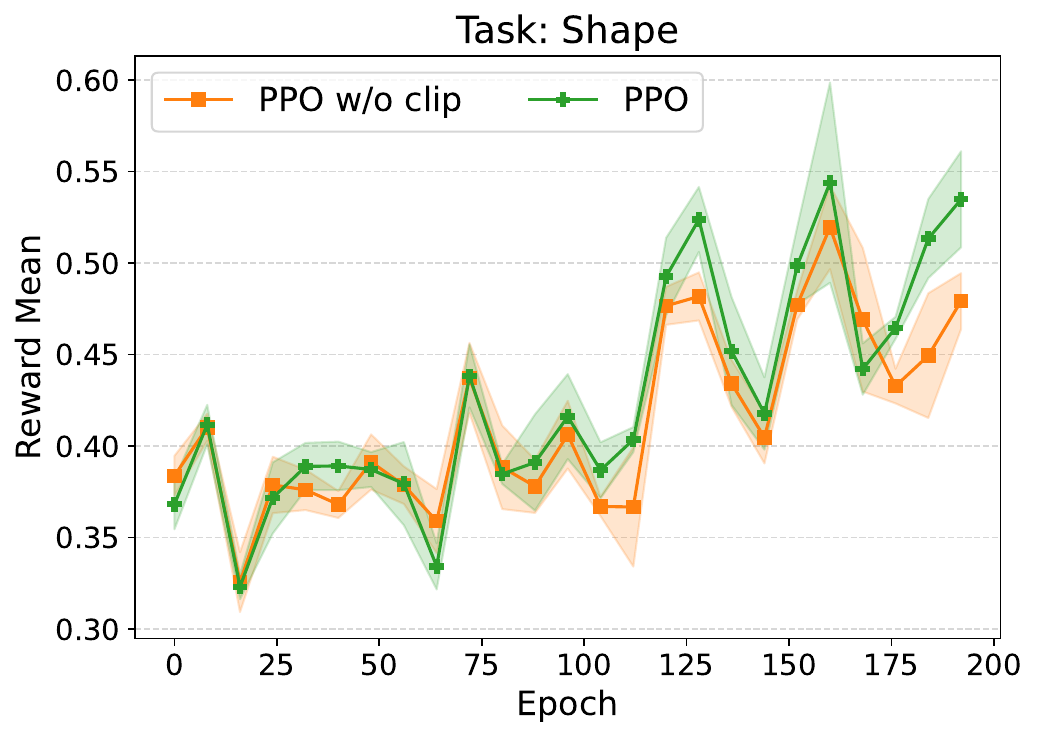}
    \vspace{-0.6em}
\end{minipage}
\vspace{-0.4em}
\begin{minipage}[t]{0.47\linewidth}
    \centering
    \includegraphics[width=0.98\linewidth]{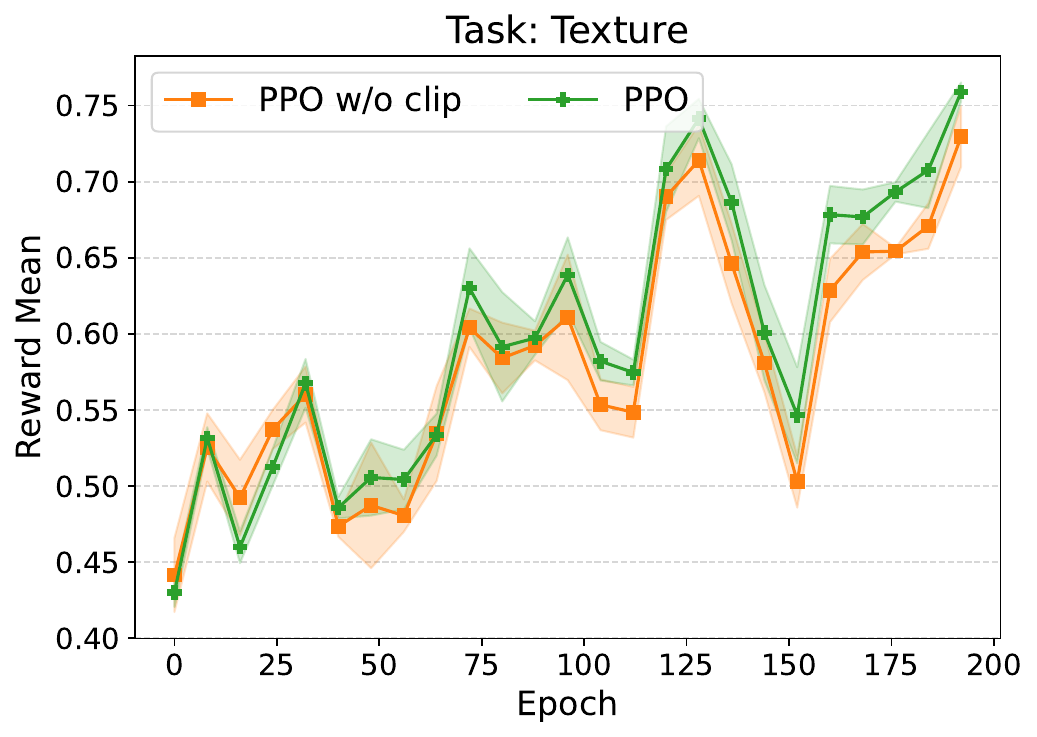}
    \vspace{-0.6em}
\end{minipage}

\caption{Effect of clipping in PPO for diffusion fine-tuning. PPO with clipping achieves more stable training and higher reward than PPO without clipping across Color, Shape, and Texture tasks.}
\vspace{-0.8em}
\label{fig:ppo_clipping_ablation}
\end{figure}

\section{Runtime and Memory Usage}
\label{app:runtime}
Figure~\ref{fig:runtime_memory_by_task} summarizes the computational cost of RL fine-tuning across tasks by reporting (a) mean wall-clock runtime and (b) peak GPU memory usage for PPO and LOOP with different numbers of trajectories 
$K$. Across tasks, runtime increases with 
$K$, reflecting LOOP’s 
$O(K)$ sampling overhead from generating multiple diffusion trajectories per prompt, while PPO is consistently the fastest. Peak GPU memory remains broadly comparable across methods, with only modest increases for larger $K$, suggesting that LOOP’s primary cost is additional sampling time rather than substantially higher memory footprint. Overall, the figure highlights the expected efficiency trade-off: LOOP improves sample efficiency at the expense of increased wall-clock compute, with relatively small changes in peak memory.

\begin{figure}[h]
\centering

\begin{minipage}[t]{0.48\linewidth}
    \centering
    \includegraphics[width=\linewidth]{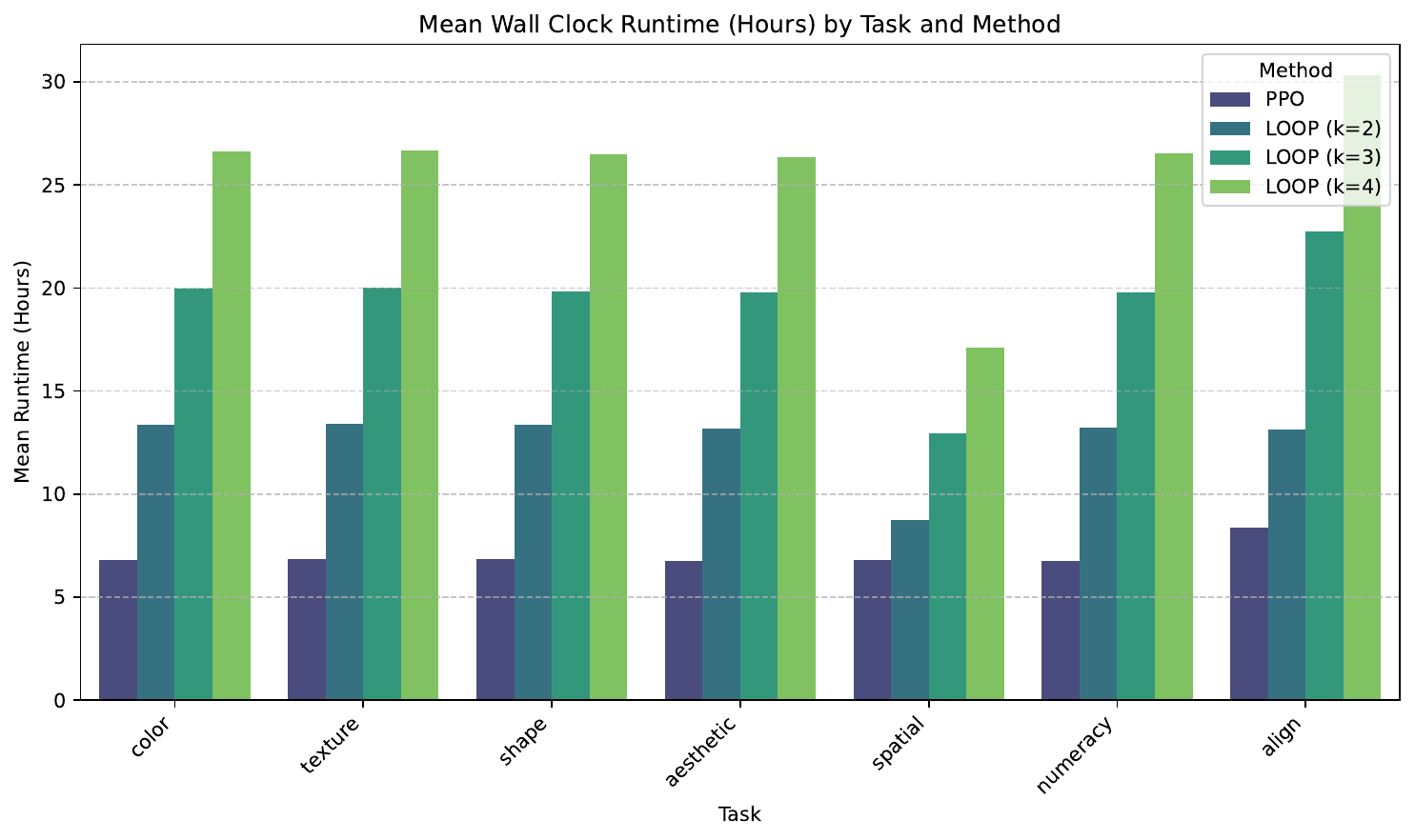}
    \vspace{-0.6em}
    
\end{minipage}\hfill
\begin{minipage}[t]{0.48\linewidth}
    \centering
    \includegraphics[width=\linewidth]{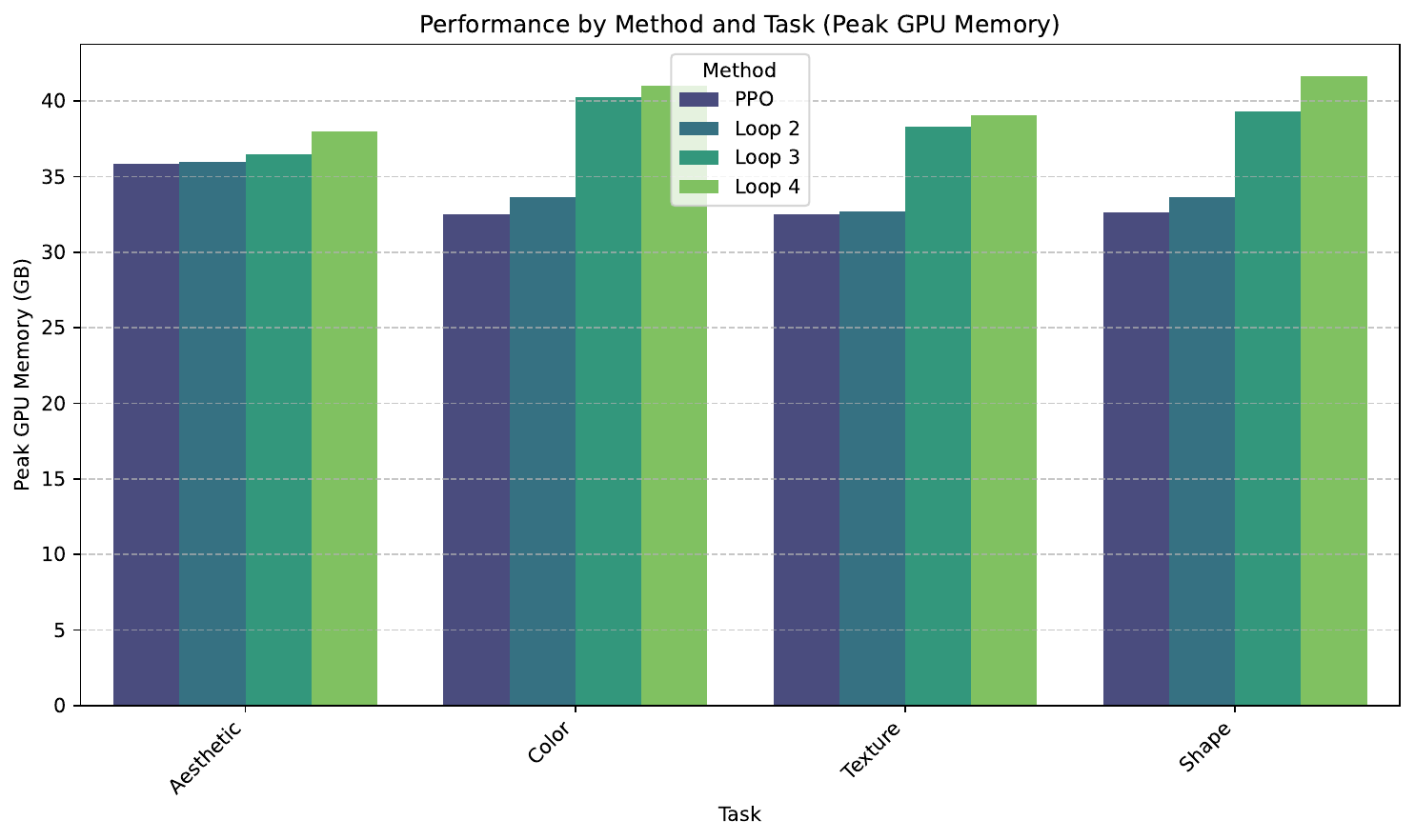}
    \vspace{-0.6em}
    
\end{minipage}

\vspace{-0.4em}
\caption{
Runtime and memory overhead of LOOP relative to PPO across tasks. The figure on the left reports mean wall-clock runtime (hours) and the figure on the right reports peak GPU memory (GB) for each task and method variant.
}
\vspace{-0.6em}
\label{fig:runtime_memory_by_task}
\end{figure}

\section{Reward-Diversity Trade-off}
\label{sect:reward_diversity_tradeoff}
\begin{figure}[!ht]
\begin{center}
\includegraphics[width=0.8\textwidth]{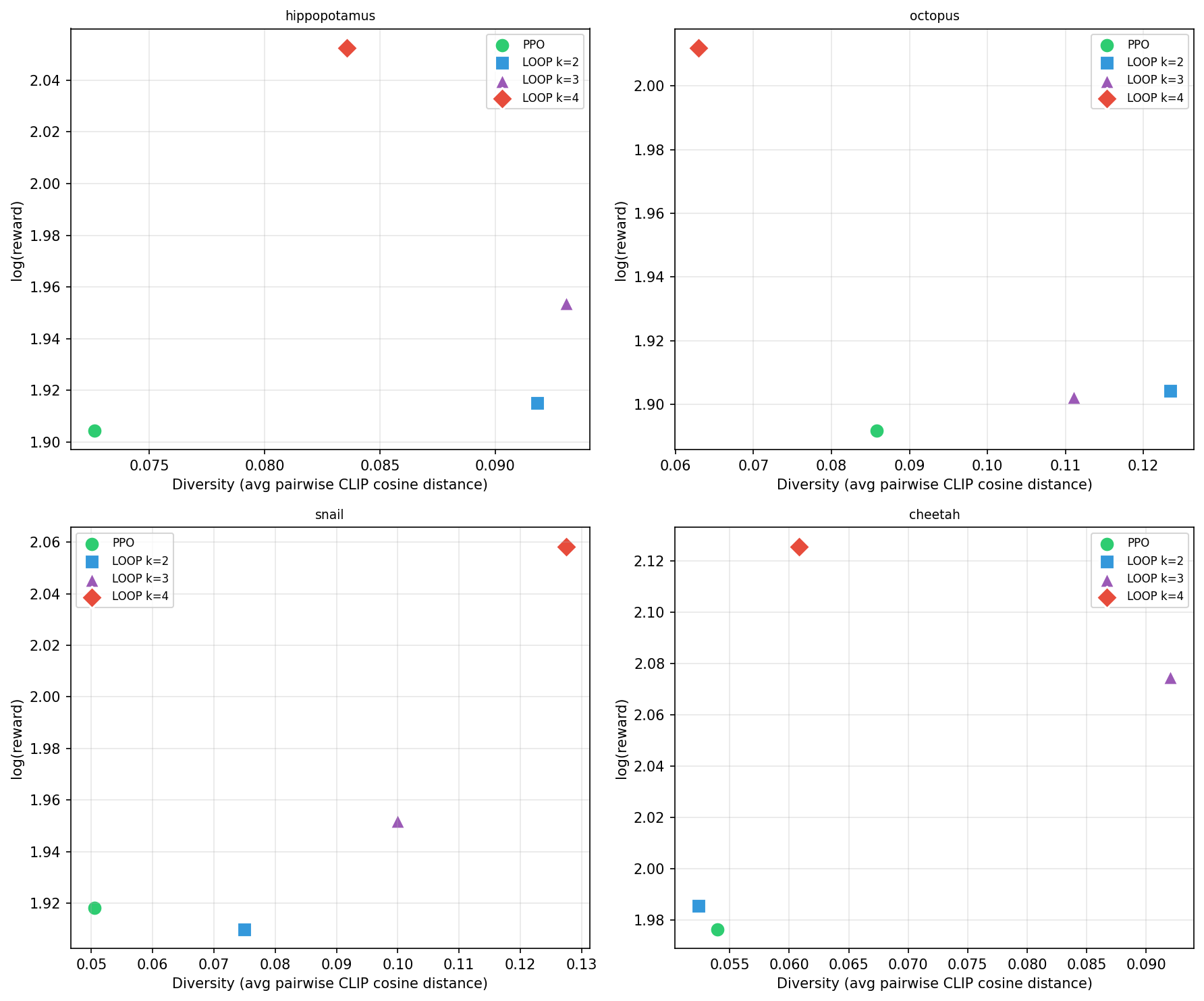}
\end{center}
\caption{Reward vs.\ diversity trade-off for PPO and LOOP variants across four prompts on the aesthetic task. We report mean log reward and diversity, 
measured as the average pairwise CLIP cosine distance across a batch of generated images~\citep{venkatraman2024amortizing}. 
LOOP $k{=}4$ consistently achieves the highest log reward while maintaining competitive 
diversity across all prompts, whereas PPO achieves the lowest diversity scores, 
suggesting mode collapse towards a narrow set of high-reward images.}
\label{fig:diversity}
\end{figure}
Figure~\ref{fig:diversity} plots the reward (on a log-scale) against generation diversity for PPO and LOOP variants across four prompts (hippopotamus, octopus, snail, and cheetah) on the aesthetic task. Diversity is quantified as the average pairwise CLIP cosine distance within a batch of generated images, following previous work~\citep{venkatraman2024amortizing}.

Across all prompts, LOOP with 
$k=4$ achieves the highest log reward while simultaneously maintaining high diversity, consistently occupying the Pareto-dominant (upper-right) region of the plot. In contrast, PPO attains the lowest diversity in every case, indicating pronounced mode collapse: optimization concentrates probability mass on a narrow subset of high-reward samples.

Increasing $k$ in LOOP improves both reward and diversity jointly, rather than trading one for the other. This suggests that sampling multiple trajectories per prompt not only strengthens optimization but also promotes exploration of distinct generation modes. The effect is consistent with the leave-one-out baseline, which reinforces relative improvements over average across diverse trajectories instead of amplifying a single dominant mode.

\end{document}